\newcommand{\blue}[1]{\textcolor[rgb]{0,0.33,0.56}{#1}}
\newcommand{\green}[1]{\textcolor[rgb]{0,0.65,0.33}{#1}}
\definecolor{shade}{rgb}{0.9,0.9,0.9}
\renewcommand{\vec}[1]{\bm{#1}}
\newcommand{\mat}[1]{\bm{#1}}
\newcommand{\revised}[1]{\textcolor{black}{ #1}}
\newcommand{\vmu}{\vec{\mu}}
\newcommand{\vSigma}{\vec{\Sigma}}
\newcommand{\vtheta}{\vec{\theta}}
\newcommand{\vw}{\vec{w}}
\newcommand{\vx}{\vec{x}}
\renewcommand*{\Re}{\mathbb{R}}
\newcommand{\bw}{\vec{w}}
\newcommand{\red}[1]{\textcolor[rgb]{0.635,0.0780,0.1840}{#1}}
\newcommand{\dotprod}[1]{\left< #1\right>}
\newcommand{\mI}{{\bf I}}
\definecolor{shadecolor}{gray}{0.90}
\declaretheoremstyle[
headfont=\normalfont\bfseries,
notefont=\mdseries, notebraces={(}{)},
bodyfont=\normalfont,
postheadspace=0.5em,
spaceabove=6pt,
mdframed={
  skipabove=8pt,
  skipbelow=8pt,
  hidealllines=true,
  backgroundcolor={shadecolor},
  innerleftmargin=4pt,
  innerrightmargin=4pt}
]{shaded}
\newcommand{\rmq}{q}
\newcommand{\rmp}{p}
\newcommand{\g}{\, | \,}
\newcommand{\grad}[1]{\nabla_{#1} \,}
\newcommand{\kl}[1]{\textrm{KL}\left(#1\right)}
\crefname{equation}{eq.}{eqs.}
\Crefname{equation}{Eq.}{Eqs.}
\title{Variational Inference with Gaussian Score Matching}
\author{%
  Chirag Modi\\
  Center for Computational Astrophysics, \\
  Center for Computational Mathematics,  \\
  Flatiron Institute, New York \\
  \texttt{cmodi@flatironinstitute.org} \\
  \And
  Charles Margossian\\
  Center for Computational Mathematics,  \\
  Flatiron Institute, New York \\
  \texttt{cmargossian@flatironinstitute.org} \\
  \And
  Yuling Yao\\
  Center for Computational Mathematics,  \\
  Flatiron Institute, New York \\
  \texttt{yyao@flatironinstitute.org} \\
  \And
  Robert Gower\\
  Center for Computational Mathematics,  \\
  Flatiron Institute, New York \\
  \texttt{rgower@flatironinstitute.org} \\
  \And
  David Blei\\
  Department of Computer Science, Statistics,\\
  Columbia University, New York \\
  \texttt{david.blei@columbia.edu} \\
  \And
  Lawrence Saul\\
Center for Computational Mathematics,  \\
  Flatiron Institute, New York\\
  \texttt{lsaul@flatironinstitute.org} \\}
\begin{document}

\maketitle

\begin{abstract}
  Variational inference (VI) is a method to approximate the
  computationally intractable posterior distributions that arise in
  Bayesian statistics. Typically, VI fits a simple parametric distribution to be close to the target posterior, minimizing an appropriate objective such as the evidence lower bound (ELBO).
  In this work, we present a new approach to VI. Our method is based on the principle of score matching, that if two distributions are equal then their score functions (i.e., gradients of the log density) are equal at every point on their support. With this
  principle, we develop score matching VI, an iterative algorithm that seeks to match the scores between the variational approximation and the exact posterior. At each iteration, score matching VI solves an inner optimization, one that minimally adjusts the current variational estimate to match the scores at a newly sampled value of the latent variables. We show that when the variational family is a  Gaussian, this inner optimization enjoys a closed form solution, which we call Gaussian score matching VI (GSM-VI). GSM-VI is also a ``black box'' variational algorithm in that it only requires a differentiable joint distribution, and as such it can be applied to a wide class of models. We compare GSM-VI to black box variational inference (BBVI), which has similar requirements but instead optimizes the ELBO. 
  We first study how GSM-VI behaves as a function of the problem
  dimensionality, the condition number of the target covariance matrix (when the target is Gaussian), and the degree of mismatch between the approximating and exact posterior distribution. We then study GSM-VI on a collection of real-world Bayesian inference problems from the posteriorDB database of datasets and models. In all of our studies we find that GSM-VI is faster than BBVI, but without sacrificing accuracy. It requires 10-100x fewer gradient evaluations to obtain a comparable quality of approximation\footnote{We provide a Python implementation of GSM-VI algorithm at \href{https://github.com/modichirag/GSM-VI}{https://github.com/modichirag/GSM-VI}.}.
\end{abstract}

\section{Introduction}

This paper is about variational inference for approximate Bayesian computation. Consider a statistical model $\rmp(\vtheta, \vx)$ of parameters $\vtheta \in \Re^d$ and observations $\vx$. Bayesian inference aims to infer the posterior distribution $p(\vtheta \g \vx)$, which is often intractable to compute.  Variational inference is an optimization-based approach to approximate the posterior~\cite{blei2017variational,Jordan1999}.

The idea behind VI is to approximate the posterior with a member of a \textit{variational family} of distributions $q_{\vw}(\vtheta)$, parameterized by \textit{variational parameters} $\vw$~\cite{blei2017variational,Jordan1999}. Specifically, VI methods establish a measure of closeness between $q_{\vw}(\vtheta)$ and the posterior, and then minimize it  with an optimization algorithm. Researchers have explored many aspects of VI, including different objectives~\citep{Knoblauch:2019,Ranganath:2016a,Yang2019,Dhaka2021,Minka:2001,Dieng:2017,Naesseth:2020} and optimization strategies~\citep{Agrawal2020,Ranganath2014,Hoffman:2013}.

In its modern form, VI typically minimizes $\kl{q_{\vw}(\theta) || \rmp(\theta \g x)}$ with stochastic optimization, and further satisfies the so-called ``black-box'' criteria~\citep{Agrawal2020,Ranganath2014,welandawe2022robust}.
The resulting black-box VI (BBVI) only requires the practitioner to specify the log joint $\log \rmp(\vtheta, \vx)$ and (often) its gradient $\nabla_{\vtheta} \log \rmp(\theta, \vx)$, which for many models can be obtained by automatic differentiation. For these reasons, BBVI has been widely implemented, and it is 
available in many 
probabilistic programming systems ~\citep{Bingham:2018,Kucukelbir2016,Salvatier:2016}.

In this paper, we propose a new approach to VI. We begin with the principle of \textit{score matching}~\cite{Hyvarinen:2005}, that when two densities are equal then their gradients are equal as well, and we use this principle to derive a new way to fit a variational distribution to be close to the exact posterior. The result is \textit{score-matching VI}. Rather than explicitly minimize a divergence, score-matching VI iteratively projects the variational distribution onto the exact score matching constraint.  This strategy enables a new black-box VI algorithm.

Score-matching VI relies on the same ingredients as reparameterization BBVI~\citep{Kucukelbir2016}---a differentiable variational family and a differentiable log joint---and so it can be as easily incorporated into probabilistic programming systems as well. Further, when the variational family is a Gaussian, score-matching VI is particularly efficient: each iteration is computable in closed form. We call the resulting algorithm Gaussian score matching VI (GSM-VI).

Unlike BBVI, GSM-VI does not rely on stochastic gradient descent (SGD) for its core optimization. Though SGD has the appeal of simplicity, it is also known to require the careful tuning of learning rates. GSM-VI was inspired by a different tradition of constraint-based algorithms for online learning~\cite{Crammer06,Dredze08,ALI-G,TASPS,SPS}. These algorithms have been extensively developed and analyzed for problems in classification, and under the right conditions, they have been observed to outperform SGD. This paper shows how to extend this constraint-based framework---and the powerful machinery behind it---from the problem of classification to the workhorse of Gaussian VI. The key insight is that score-matching (unlike ELBO maximization) lends itself naturally to a constraint-based formulation.

We empirically compared GSM-VI to reparameterization BBVI on several classes of models, and with both synthetic and real-world data. In general, we found that GSM requires 10-100x fewer gradient evaluations to converge to an equally good approximation. When the exact posterior is Gaussian, we found GSM-VI scales significantly better with respect to dimensionality and is insensitive to the condition number of the target covariance. When the exact posterior is non-Gaussian, we found GSM-VI enjoys faster convergence without sacrificing the quality of the final approximation.



This paper makes the following contributions: \begin{itemize}[topsep=0mm]
\item We introduce \textit{score matching variational inference}, a new black-box approach to fitting $q_{\vw}(\vtheta)$ to be close to $\rmp(\vtheta \g \vx)$. Score matching VI requires no tunable optimization hypermarameters, to which BBVI can be sensitive.

\item When the variational family is Gaussian, we develop \textit{Gaussian score matching variational inference} (GSM-VI). It establishes efficient closed-form iterates for score matching VI.

\item We empirically compare GSM-VI to reparameterization BBVI. Across many models and datasets, we found that GSM-VI enjoys faster convergence to an equally good approximation.

\end{itemize}

We develop score matching VI in \Cref{sec:gsm} and study its performance in Section \ref{sec:exp}.


\textbf{Related work.} Our work introduces a new method for black-box variational inference that relies only on having access to the gradients of the variational distribution and the log joint.  GSM-VI has similar goals to automatic-differentiation variational inference (ADVI) \citep{Kucukelbir2016} and Pathfinder~\citep{Zhang2022}, which also fit multivariate Gaussian variational families, but do so by maximizing the ELBO using stochastic optimization. Similar to GSM-VI, the algorithm of ref. \citep{Seljak2019} also seeks to match the scores of the variational and the target posterior, but it does so by minimizing the L2 loss between them. 

A novel aspect of this work is how GSM-VI fits the variational
parameters. Rather than minimize a loss function, it aims to solve
a set of nonlinear equations.  Similar ideas have been pursued in the context of fitting a model to data using  empirical risk minimization (ERM). For example, passive agressive (PA)
methods~\cite{Crammer06} and the stochastic polyak stepsize (SPS) are
also derived via projections onto sampled nonlinear
equations~\cite{ALI-G,TASPS,SPS}. A probabilistic extension of PA methods is known as confidence-weighted (CW) learning~\cite{Dredze08}. In this framework, the learner maintains a multivariate Gaussian distribution over the weight vector of a linear classifier. Like CW learning, the second step of GSM-VI also minimizes a KL divergence between multivariate Gaussians. But it involves a different projection, one of score-matching versus linear classification.

y\section{Score Matching Variational Inference}
\label{sec:gsm}



Suppose for the moment that the variational family $q_{\bw}(\vtheta)$ is rich enough to perfectly capture the posterior $p(\vtheta \g \vx)$. In other words, there exists a $\vw^*$ such that \begin{align}
  \label{eq:interpolation}
  \log \, q_{\bw^*}(\vtheta)
  &= \log \, p(\vtheta | \vx), \qquad \forall \vtheta \, \in \Theta.
\end{align}
If we could solve \Cref{eq:interpolation} for $\bw^*$, the resulting variational distribution would be a perfect fit. The challenge is that the posterior on the right side is intractable to compute.

To help, we appeal to score matching~\citep{Hyvarinen:2005}.  Define the score of a distribution to be the gradient of its log with respect to the variable\footnote{We make this clear because, in some literature, the score is the gradient with respect to the parameter.}, e.g., $\nabla_{\vtheta} \log q_{\vw}(\vtheta)$. The principle of score matching is that if two distributions are equal at each point in their support then their score functions are also equal.

To use score matching for VI, we first write the log posterior as the log joint minus the normalizing constant, i.e., the marginal distribution of $\vx$,
\begin{align}
  \log \, p(\vtheta \g \vx) &= \log \, p(\vtheta, \vx) - \log \, p(\vx).
\end{align}
With this expression, the principle of score matching leads to the following Lemma.
\begin{restatable}{lemma}{scorematch} \label{lem:scorematch}
\revised{Let $\Theta \subset \Re^d$ be a set that contains the support of $p(\vec{\vtheta, \vx})$ and  $q_{\vec{w}^*}(\vtheta)$ for some $\vw^*.$  That is $p(\vec{\vtheta, \vx}) = q_{\vec{w}^*}(\vtheta) =0$ for every $\theta \not\in \Theta.$ Furthermore, suppose that $\Theta$ is path-connected. }
  The parameter $\vw^*$ satisfies
  \begin{align}
    \label{eq:scorematch}
    \nabla_{\vtheta} \log \, q_{\vec{w}^*}(\vtheta)
    &=
      \nabla_{\vtheta} \log \, p(\vec{\vtheta, \vx})
      ,\qquad
      \forall \vtheta \in \Theta,
  \end{align}
if and only if $\bw^*$ also satisfies \Cref{eq:interpolation}.
\end{restatable}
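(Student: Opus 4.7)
The plan is to prove the two directions separately, since one is immediate and the other requires the path-connectedness hypothesis in an essential way.

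For the easy direction, assume that $\vw^*$ satisfies \Cref{eq:interpolation}, so $\log q_{\vw^*}(\vtheta) = \log p(\vtheta \g \vx) = \log p(\vtheta, \vx) - \log p(\vx)$ for every $\vtheta \in \Theta$. I would simply take $\nabla_\vtheta$ of both sides and observe that $\log p(\vx)$ is a constant in $\vtheta$, hence drops out, yielding \Cref{eq:scorematch}.

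For the nontrivial direction, the idea is to show that if the two scores agree on $\Theta$, then the two log densities differ by a constant on $\Theta$, and then to identify that constant using normalization. Concretely, I would define $f(\vtheta) := \log q_{\vw^*}(\vtheta) - \log p(\vtheta, \vx)$ on $\Theta$, so that \Cref{eq:scorematch} gives $\nabla_\vtheta f(\vtheta) = \vec{0}$ on $\Theta$. Because $\Theta$ is path-connected (and open where the log densities are smooth), I can integrate the gradient along any path joining two points $\vtheta_1, \vtheta_2 \in \Theta$ to conclude $f(\vtheta_1) = f(\vtheta_2)$, so $f \equiv c$ on $\Theta$ for some constant $c \in \Re$. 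Exponentiating gives $q_{\vw^*}(\vtheta) = e^c\, p(\vtheta, \vx)$ pointwise on $\Theta$, and the support hypothesis extends this equality to all of $\Re^d$ (both sides vanish outside $\Theta$).

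The final step is to pin down $c$ via normalization. Integrating $q_{\vw^*}(\vtheta) = e^c p(\vtheta, \vx)$ over $\Re^d$ gives $1 = e^c p(\vx)$, so $e^c = 1/p(\vx)$ and therefore $q_{\vw^*}(\vtheta) = p(\vtheta, \vx)/p(\vx) = p(\vtheta \g \vx)$ on $\Theta$. Taking logs recovers \Cref{eq:interpolation}.

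The main obstacle, and the reason the path-connectedness assumption is stated explicitly, is the passage from vanishing gradient to constant difference: without path-connectedness, $f$ could be a different constant on each connected component of $\Theta$, each giving a different normalizing factor, and the matched distributions would fail to coincide. Everything else (the chain rule, the vanishing of $\nabla_\vtheta \log p(\vx)$, and the normalization argument) is routine given the stated support hypothesis.
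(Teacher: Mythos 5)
Your proposal is correct and follows essentially the same route as the paper's proof: differentiate for the easy direction, then use path-connectedness to integrate the score equality (gradient theorem) to get a constant offset between the log densities, and pin down that constant by normalization using the support hypothesis. The only cosmetic difference is that you compare $\log q_{\vw^*}$ to the log joint (so your constant comes out as $-\log p(\vx)$), while the paper compares it to the log posterior (so its constant is $0$); the arguments are otherwise identical.
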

What is notable about \Cref{eq:scorematch} is that the right side is the gradient of the log joint. Unlike the posterior, the gradient of the log joint is tractable to compute for a large class of probabilistic models. (The proof is in the appendix.)

This lemma motivates a new algorithm, \textit{score matching VI}. 
The idea is to iteratively refine the variational parameters $\vw$ to try to solve the system of equations in \Cref{eq:scorematch} as well as possible. At each iteration $t$, it first samples a new $\vtheta_t$ from the current variational approximation and then minimally adjusts $\vw$ to satisfy \Cref{eq:scorematch} for that value of $\vtheta_t$.

\newpage
\begin{framed}
\vspace{-3ex}
  \textbf{Score matching variational inference} 

  At iteration $t$:
\begin{enumerate}[noitemsep, leftmargin=*,topsep=1mm]
  \vspace{-2.5mm}
  \item Sample $\vtheta_t \sim \rmq_{\bw_{t}}(\vtheta)$.  \item Update the variational parameters:
    \begin{align}
      \label{eq:KLproj}
      \begin{split}
        \bw_{t+1}
        = \arg & \min_{\bw} \kl{\rmq_{\bw_{t}}(\vtheta) \, || \, \rmq_{\bw}(\vtheta)}
        \\& 
        \textrm{such that} \quad
        \grad{\vtheta}
        \log \rmq_{\bw}(\vtheta_t) = \grad{\vtheta} \log \rmp(\vtheta_t, \vx).
      \end{split}
    \end{align}
  \end{enumerate}
\vspace{-2ex}
\end{framed}

\vspace{-1ex}
This algorithm for score matching VI was inspired by earlier online algorithms for learning a classifier from a stream of labeled examples. One particularly elegant algorithm in this setting is known as passive-aggressive (PA) learning~\citep{Crammer06}, in which a model is incrementally updated by the minimal amount to classify each example correctly by a large margin. This approach was subsequently extended to a probabilistic setting, known as confidence-weighted (CW) learning~\citep{Dredze08} in which one minimally updates a {\it distribution} over classifiers. Our algorithm is similar in that it minimally updates an approximating distribution for VI, but it is different in that enforces constraints for score matching instead of large margin classification.

At a high level, what makes this approach to VI likely to succeed or fail? Certainly it is necessary that there are more variational parameters than elements of the latent variable $\vtheta$; when this is not the case, it may be impossible to satisfy a {\it single} score matching constraint in \Cref{eq:KLproj}. That said, setting the number of variational parameters to be at least as large as the latent variable is standard, as in, e.g., a factorized (or mean-field) variational family. It is also apparent that the algorithm may never converge if the target posterior is not contained in the variational family, or that the the variational approximation collapses to a point mass, which stalls the updates altogether. While we cannot dismiss these possibilities out of hand, we did not see either issue in any of the empirical studies of \Cref{sec:exp}.




For more intuition, \Cref{fig:density} illustrates the effect of the update in \Cref{eq:KLproj} when both the target and approximating distribution are 1d Gaussian.
The target posterior $\rmp(\vtheta \g \vx)$ is shaded blue. The plot shows the initial variational distribution $\rmq_{\bw_0}$ (light grey curve) and its update to $\rmq_{\bw_1}$ (medium grey) so that the gradient of the updated distribution matches the gradient of the target at the sampled $\vtheta_0$ (dotted red tangent line). It also shows the update from $\bw_1$ to $\bw_2$, now matching the gradient at$\vtheta_1$. With these two updates, $\rmq_{\bw_2}$ (dark grey) is very close to the target $\rmp(\vtheta \g \vx)$. With this picture in mind, we now develop the details of this algorithm for a widely applicable setting.

\textbf{Gaussian Score Matching VI.} Suppose the variational distribution belongs to a multivariate Gaussian family $q_{\vec{w}}(\vtheta) := \mathcal{N}(\vmu, \vSigma)$, which is a common setting especially in systems for automated approximate inference~\citep{Kucukelbir2016,Agrawal2020}. One of our main contributions is to show that in this case \Cref{eq:KLproj} has a closed form solution. The solution $\bw_{t+1} = (\vec{\mu}_{t+1} , \mat{\Sigma}_{t+1})$
has the following form:



\begin{figure}[t]
\centering
\begin{subfigure}{.51\textwidth}
  \centering
  \includegraphics[width=1.0\textwidth,  bb=-0 -0 962 911]{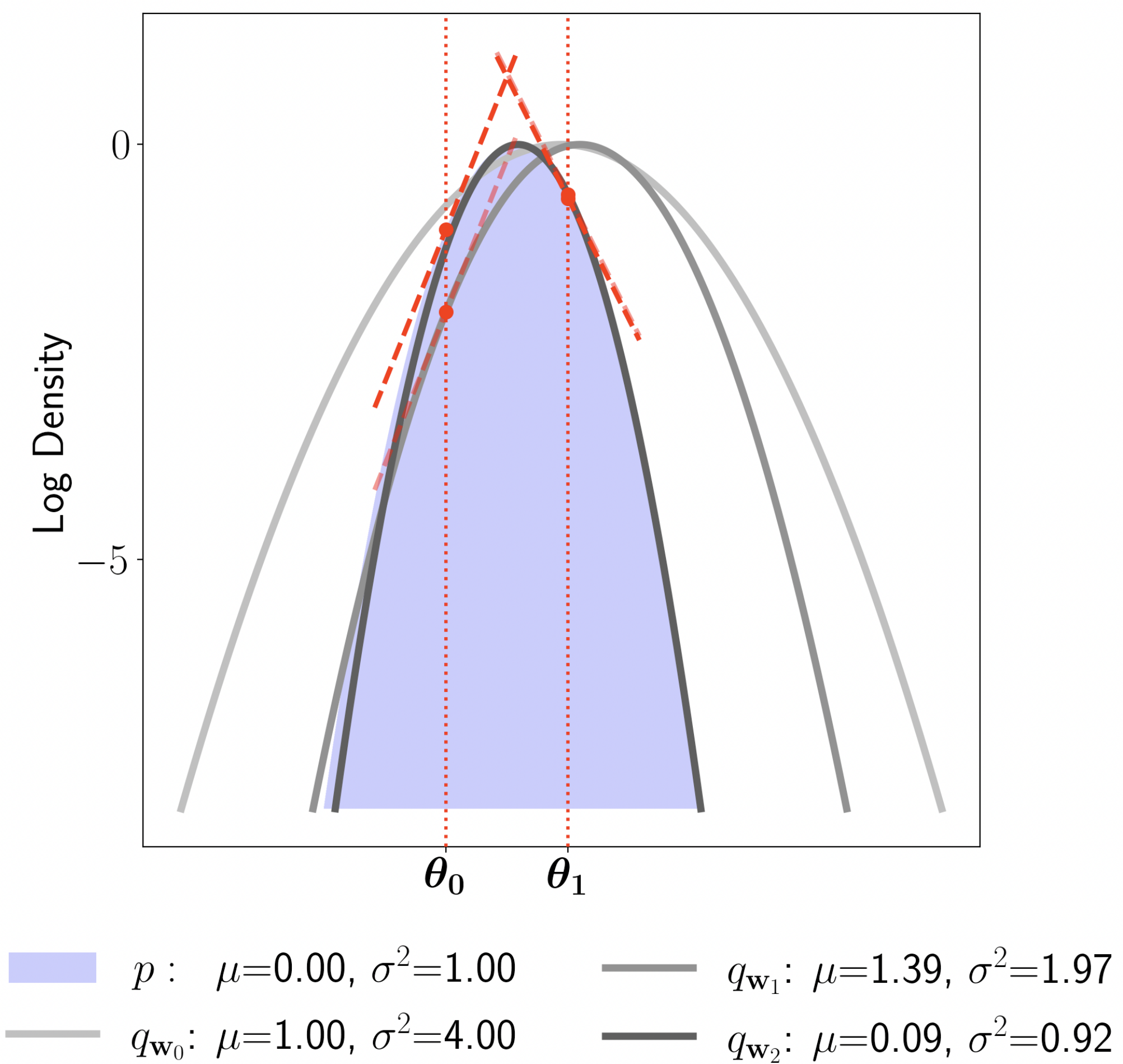}
  \caption{
  Two iterations of GSM-VI. The log density of the target posterior $p$ is shaded blue; the initial distribution $q_{\bw_0}$ is light grey; the first update $q_{\bw_1}$ is medium grey;
and the second update $q_{\bw_2}$ is dark grey.}
  \label{fig:density}
\end{subfigure} \hspace{0.03\textwidth}
\begin{subfigure}{.44\textwidth}
    \centering
    \includegraphics[width =1.0\textwidth, height = 1.00\textwidth, bb=-0 -0 888 864]{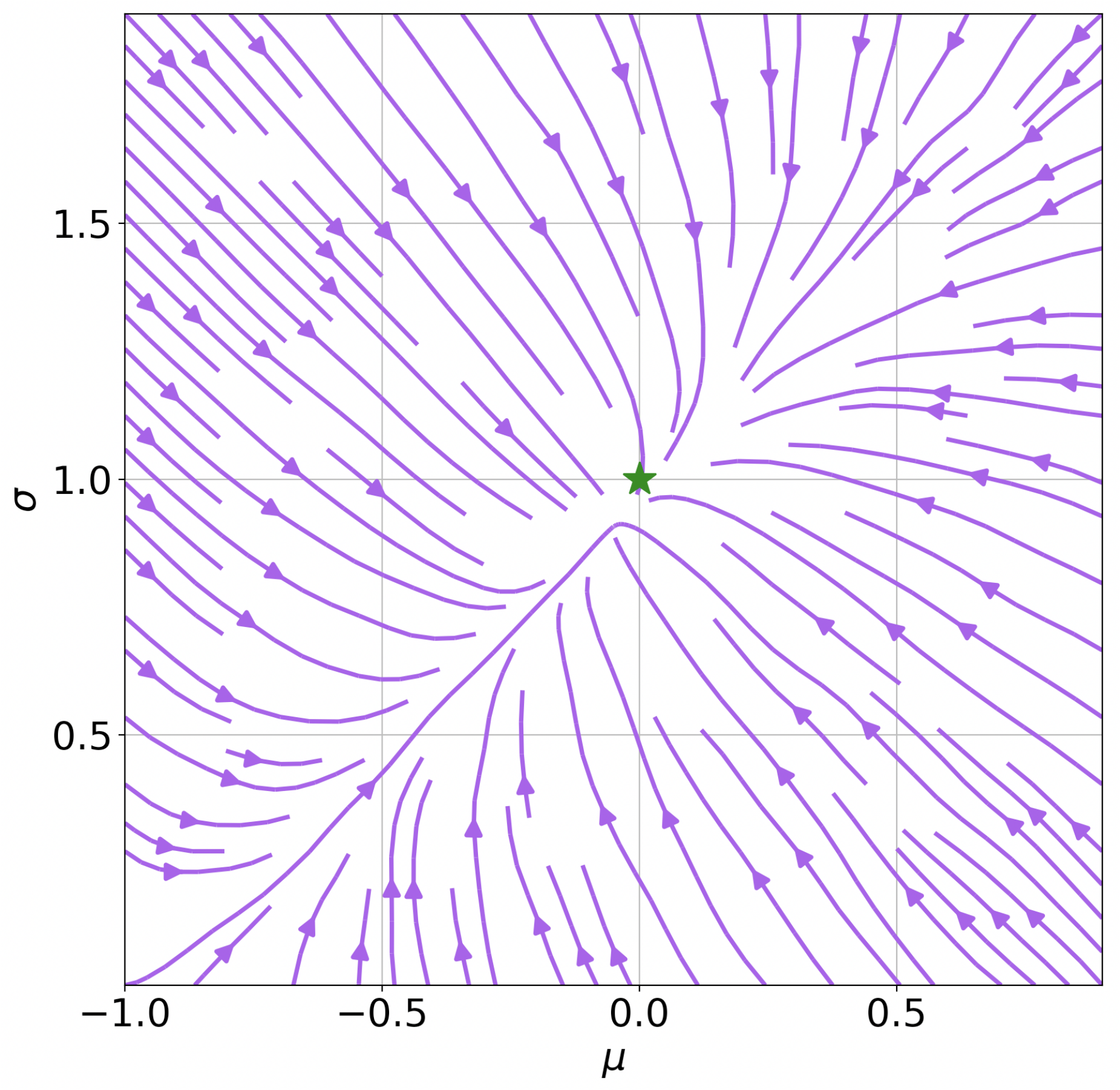}
   \caption{  The vector field of \Cref{eq:KLproj}, averaged over 5 independent samples at each point, where $p(\vtheta \g \vx) = \mathcal{N}(0,1)$. The solution $(\mu,\sigma) =(0,1)$ is the green star.}
    \label{fig:vec}
\end{subfigure}
\end{figure}

\newtcolorbox{empheqboxed}{colback=gray!20, 
 colframe=white,
 width=\textwidth,
 sharpish corners,
 top=1mm, 
 bottom=2mm
}
\begin{empheqboxed}
\vspace{-2.5ex}
\begin{align}
 \label{eq:muupdate-intro}
    \vec{\mu}_{t+1} &\; = \; \vec{\mu}_t +\mat{A}_t\, \left(\grad{\vtheta} \log \rmp(\vtheta_t, \vx) - \grad{\vtheta} \log \rmq_{\bw_{t}}(\vtheta_t) \right) \\
    \label{eq:sigamupdate-intro}
    \mat{\Sigma}_{t+1} &\; =\; \mat{\Sigma}_t + (\vec{\mu}_t-\vtheta_t)(\vec{\mu}_t-\vtheta_t)^\top - (\vec{\mu}_{t+1}-\vtheta_t)(\vec{\mu}_{t+1}-\vtheta_t)^\top   
    \end{align}
\vspace{-4.5ex}
\end{empheqboxed}
where $\mat{A}_t \in \Re^{d\times d}$ is a matrix defined in the theorem below. These exact updates only require the score of the log joint $\nabla_{\vtheta} \log p(\vtheta, \vx)$ and the score of the variational distribution $\nabla_{\vtheta} \log q_{\vw}(\vtheta)$.

\Cref{eq:muupdate-intro,eq:sigamupdate-intro} also provide intuition. Consider the approximation at the $t$th iteration $q_{\vw_t}$ and the current sample $\vtheta_t$. First suppose the scores already match at this sample, that is $\grad{\vtheta} \log \rmp(\vtheta_t, \vx) = \grad{\vtheta} \log \rmq_{\bw_{t}}(\vtheta_t)$. Then the mean does not change $ \vec{\mu}_{t+1} = \vec{\mu}_t$ and, similarly, the two rank-one terms in the covariance update in \Cref{eq:sigamupdate-intro} cancel out so $\mat{\Sigma}_{t+1} =\mat{\Sigma}_{t}$. This shows that when $q_{\bw_t}(\vtheta) =p(\vtheta, \vx)$ for all $\vtheta$, the method stops. On the other hand, if the scores do not match, then the mean is updated proportionally to the difference between the scores, and the covariance is updated by a  rank-two correction. For a one dimensional target $p(\vtheta , \vx) = \mathcal{N}(0,1)$, Figure~\ref{fig:vec} illustrates the vector field of updates  The vector field points to the solution (green star) and, once there, the method stops. 


We now formalize this result and give the exact expression for $\mat{A}_t$.


\begin{restatable}{theorem}{GSMtheo} \label{theo:GSM}
\textbf{(GSM-VI updates)} Let $p(\vtheta,\vec{x})$ be given for some $\vtheta\in\Re^d$, and let $q_{\bw^t}(\vtheta)$ and $q_{\bw}(\vtheta)$ be  multivariate normal distributions with means $\vec{\mu}_t$ and $\vec{\mu}$ and covariance matrices $\mat{\Sigma}_t$ and $\mat{\Sigma}$, respectively. As shorthand, let $\vec{g}_t := \nabla_{\vtheta} \log p(\vtheta_t,\vec{x})$ and let
\begin{equation}
\vec{\mu}_{t+1}, \mat{\Sigma}_{t+1} \; = \; \arg\!\min_{\!\!\!\!\!\!\!\vec{\mu},\mat{\Sigma}\succeq 0 
} \Big[ {\rm KL}(q_t,q)\Big] \quad\mbox{such that}\quad\nabla_{\vtheta} \log q(\vtheta_t) = \nabla_{\vtheta} \log p(\vtheta_t,\vec{x}).
\label{eq:GSMopt}
\end{equation}
The solution to eq.~(\ref{eq:GSMopt})
is given by \Cref{eq:muupdate-intro,eq:sigamupdate-intro} where 
\begin{eqnarray}\label{eq:At}
 \mat{A}_t & := & \frac{1}{1+\rho}\left[ \mI - \frac{(\vec{\mu}_t-\vtheta_t) \vec{g}_t^\top}{1+\rho+ (\vec{\mu}_t-\vtheta_t)^\top \vec{g}_t}\right]\mat{\Sigma}_t\,,
\end{eqnarray}
and  $\rho$ is the positive root of the quadratic equation
\begin{equation}
\rho(1\!+\!\rho)\ =\ \vec{g}_t^\top\mat{\Sigma}_t\vec{g}_t + \big[(\vec{\mu}_t\!-\!\vtheta_t)^\top\!\vec{g}_t\big]^2.
\end{equation}
\end{restatable}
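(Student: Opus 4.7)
The plan is to reduce \Cref{eq:GSMopt} to an unconstrained minimization over the covariance by exploiting the fact that the Gaussian score is linear, and then to extract closed-form updates from the resulting matrix equation using the rank-one structure of $\vec{g}_t\vec{g}_t^\top$.

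First, since $\nabla_\vtheta \log q(\vtheta) = -\vSigma^{-1}(\vtheta-\vmu)$ for a Gaussian, the constraint in \Cref{eq:GSMopt} is just the linear equation $\vmu = \vtheta_t + \vSigma \vec{g}_t$, which I would use to eliminate $\vmu$. Writing $\vec{v}:=\vtheta_t-\vmu_t$ and substituting into the standard KL formula between two multivariate Gaussians, the cross term $2\vec{v}^\top \vec{g}_t$ does not depend on $\vSigma$, so the $\vSigma$-dependent part of the objective collapses to
\begin{equation*}
F(\vSigma) \;=\; \log|\vSigma| \;+\; \operatorname{tr}\!\bigl(\vSigma^{-1}(\vSigma_t + \vec{v}\vec{v}^\top)\bigr) \;+\; \vec{g}_t^\top \vSigma \vec{g}_t,
\end{equation*}
which is strictly convex on the positive-definite cone, so any stationary point is its unique global minimum.

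Second, setting $\nabla_\vSigma F = 0$ and multiplying on both sides by $\vSigma$ yields the Riccati-like matrix equation
\begin{equation*}
\vSigma_{t+1} + \vSigma_{t+1}\vec{g}_t\vec{g}_t^\top \vSigma_{t+1} \;=\; \vSigma_t + (\vmu_t-\vtheta_t)(\vmu_t-\vtheta_t)^\top.
\end{equation*}
Combined with $\vmu_{t+1}-\vtheta_t = \vSigma_{t+1}\vec{g}_t$ (inherited from the eliminated constraint), the second term on the left is exactly $(\vmu_{t+1}-\vtheta_t)(\vmu_{t+1}-\vtheta_t)^\top$, so rearranging recovers \Cref{eq:sigamupdate-intro} directly.

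Third, to extract an explicit formula for $\vmu_{t+1}$, I would apply this matrix equation to $\vec{g}_t$ on the right. Setting $\vec{s}:=\vSigma_{t+1}\vec{g}_t$, $c:=\vec{v}^\top \vec{g}_t$, and $\rho:=\vec{g}_t^\top\vec{s}$ gives $(1+\rho)\vec{s} = \vSigma_t\vec{g}_t + c\,\vec{v}$, and contracting once more with $\vec{g}_t^\top$ produces the scalar quadratic $\rho(1+\rho) = \vec{g}_t^\top\vSigma_t\vec{g}_t + c^2$, whose unique nonnegative root is the $\rho$ of the theorem. Since $\vmu_{t+1}=\vtheta_t+\vec{s}$, this already yields a closed form for $\vmu_{t+1}$; to put it in the claimed form from \Cref{eq:muupdate-intro,eq:At}, I would use $\nabla_\vtheta \log q_{\bw_{t}}(\vtheta_t) = -\vSigma_t^{-1}\vec{v}$ and the scalar identity $(1+\rho-c)(\rho+c) = \vec{g}_t^\top\vSigma_t\vec{g}_t + c$ (which follows by expanding and applying the quadratic) to verify that $\vmu_t + \mat{A}_t\bigl(\vec{g}_t - \nabla_\vtheta \log q_{\bw_t}(\vtheta_t)\bigr)$ collapses to $\vtheta_t + \vec{s}$.

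The main obstacle is this last algebraic reconciliation: the rank-one correction inside $\mat{A}_t$ is precisely a Sherman--Morrison-style inversion of $(\mI + \vSigma_t \vec{g}_t\vec{g}_t^\top)$, and most of the work is bookkeeping scalars against the identity above. A secondary check I would then carry out is that $\vSigma_{t+1}\succ 0$: from $\vec{s} = (1+\rho)^{-1}(\vSigma_t + \vec{v}\vec{v}^\top)\vec{g}_t$ one obtains $\vec{s}^\top(\vSigma_t + \vec{v}\vec{v}^\top)^{-1}\vec{s} = \rho/(1+\rho) < 1$, so by the Schur complement the rank-two update in \Cref{eq:sigamupdate-intro} is positive definite whenever $\vSigma_t$ is, justifying inductively that the iterates stay in the feasible set.
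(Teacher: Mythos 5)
Your proposal is correct in all essential steps, and it takes a genuinely different route from the paper. The paper handles the score-matching constraint with a Lagrange multiplier $\vec{\lambda}$, enforces symmetry of $\vSigma^{-1}$ through an auxiliary parametrization $\mat{\Phi}$, derives three stationarity conditions, eliminates $\vec{\lambda}$ to obtain the rank-two covariance update, and only then solves a linear system for $\vmu$ via Sherman--Morrison; positive definiteness is proved separately by an eigenvector argument combined with Cauchy--Schwarz. You instead eliminate $\vmu$ outright using the linearity of the Gaussian score ($\vmu = \vtheta_t + \vSigma\vec{g}_t$), reduce to an unconstrained problem in $\vSigma$ alone, and read both updates off the resulting Riccati-type equation $\vSigma + \vSigma\vec{g}_t\vec{g}_t^\top\vSigma = \vSigma_t + \vec{v}\vec{v}^\top$; contracting with $\vec{g}_t$ gives the same quadratic for $\rho$, and your scalar identity $(1+\rho-c)(\rho+c)=\vec{g}_t^\top\vSigma_t\vec{g}_t+c$ is exactly what is needed to reconcile $\vtheta_t+\vec{s}$ with the $\mat{A}_t$ form of \Cref{eq:muupdate-intro,eq:At} (I verified this reconciliation; it closes). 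Your positive-definiteness argument, $\vec{s}^\top(\vSigma_t+\vec{v}\vec{v}^\top)^{-1}\vec{s} = \rho/(1+\rho)<1$ plus a Schur complement, is shorter and cleaner than the paper's, and your route additionally addresses uniqueness/global optimality, which the paper's first-order Lagrangian analysis leaves implicit.

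One claim does need repair: $F(\vSigma)=\log|\vSigma|+\operatorname{tr}\bigl(\vSigma^{-1}(\vSigma_t+\vec{v}\vec{v}^\top)\bigr)+\vec{g}_t^\top\vSigma\vec{g}_t$ is \emph{not} strictly convex in $\vSigma$ on the positive-definite cone. In one dimension, $F(\sigma)=\log\sigma+m/\sigma+g^2\sigma$ has $F''(\sigma)=(2m-\sigma)/\sigma^3$, which is negative for $\sigma>2m$. The fix is to change variables to the precision matrix $\mat{P}=\vSigma^{-1}$: then $F=-\log|\mat{P}|+\operatorname{tr}\bigl(\mat{P}(\vSigma_t+\vec{v}\vec{v}^\top)\bigr)+\vec{g}_t^\top\mat{P}^{-1}\vec{g}_t$ \emph{is} strictly convex (strictly convex log-determinant term, linear term, convex matrix-fractional term), the positive-definite cone maps bijectively to itself under inversion, and your conclusion that the unique stationary point is the global minimum survives verbatim. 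Alternatively, uniqueness follows directly from your own derivation without any convexity: any positive-definite stationary point must have $\vec{g}_t^\top\vSigma\vec{g}_t$ equal to the positive root $\rho$ of the quadratic, which pins down $\vec{s}=\vSigma\vec{g}_t$, hence $\vSigma=\vSigma_t+\vec{v}\vec{v}^\top-\vec{s}\vec{s}^\top$ and $\vmu=\vtheta_t+\vec{s}$. With that one-line correction, your argument is a complete and arguably tighter proof of the theorem.
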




With the definition of $\mat{A}_t$ in Eq. \ref{eq:At} we can see that the computational complexity of updating $\vec{\mu}$ and $\mat{\Sigma}$ via~\Cref{eq:muupdate-intro,eq:sigamupdate-intro} is $\mathcal{O}(d^2)$, where $\vtheta \in \Re^d$ and we assume the cost of computing the gradients is $\mathcal{O}(d).$ Note this is the best possible iteration complexity we can hope for, since we store and maintain the full covariance matrix of $d^2$ elements. (The proof is in the appendix.)

\Cref{alg:gsm} presents the full GSM-VI algorithm. Here we also use mini-batching, where we average over $B \in \mathbb{N}$ independently sampled updates of \Cref{eq:muupdate-intro,eq:sigamupdate-intro} before updating the mean and covariance.

\begin{algorithm}[t]
\label{alg:gsm}
\SetKwInOut{KwIn}{Input}
\SetKwInOut{KwOut}{Output}
\KwIn{Initial mean estimate $\vec{\mu}_0$, 
initial covariance estimate $\vec{\Sigma}_0$, 
target distribution $p(\vtheta| \vx)$, number of iterations $N\in \mathbb{N}$, batch size $B\in \mathbb{N}.$}
\SetKwComment{Comment}{$\triangleright$\ }{}

\KwOut{Multivariate normal variational distribution $q_{\bw}(\vtheta):=\mathcal{N}(\vec{\mu}, \vec{\Sigma}$)}

\For{$i = 0, \ldots,  N-1$ \quad \quad \Comment{iteration loop}  }{
    \For{$j =0 , \ldots,  B-1$ \quad \quad \Comment{batch loop}  }  { 
        Sample $\vtheta^{(j)}\sim \mathcal{N}(\vec{\mu_{i}}, \vec{\Sigma_{i}})$ 
    
        $\vec{g} \leftarrow \nabla_{\vtheta} \log p(\vtheta^{(j)} | \vx)$ 
    
        $\vec{\varepsilon}\ \leftarrow \ \mat{\Sigma}_i\vec{g} - \vec{\mu}_i + \vtheta$ 
    
        Solve $\rho(1\!+\!\rho)\ =\ \vec{g}^\top\mat{\Sigma}_i\vec{g} + \big[(\vec{\mu}_i\!-\!\vtheta)^\top\!\vec{g}\big]^2$ for $\rho>0$ 

        $ \vec{\delta\mu}^{(j)} \leftarrow \frac{1}{1+\rho}\left[ \mI - \frac{(\vec{\mu}_i-\vtheta) \vec{g}^\top}{1+\rho+ (\vec{\mu}_i-\vtheta)^\top \vec{g}}\right] \vec{\varepsilon}$ 

        $ \vec{\mu_i}^{(j)} \leftarrow \vec{\mu_i} + \vec{\delta\mu}^{(j)} $
    
        $ \vec{\delta\Sigma}^{(j)}  \leftarrow   (\vec{\mu}_i-\vtheta)(\vec{\mu}_i-\vtheta)^\top - (\vec{\mu_i}^{(j)}-\vtheta)(\vec{\mu_i}^{(j)}-\vtheta)^\top $  
    }
    
    Update $\vec{\mu_{i+1}} \leftarrow \vec{\mu_i} + \sum_j\vec{\delta\mu^{(j)}} / B$ 
    
    Update $\vec{\Sigma_{i+1}} \leftarrow \vec{\Sigma_{i}} + \sum_j\vec{\delta\Sigma^{(j)}} / B$ 
}
$q_{\bw}(\vtheta) \leftarrow \mathcal{N}( \vec{\mu}_{N}, \vec{\Sigma}_{N})$ 
\caption{Gaussian Score Matching VI}
\label{alg:gsm}
\end{algorithm}


\section{Empirical Studies}
\label{sec:exp}

\begin{algorithm}[t]
\label{alg:bbvi}
\SetKwInOut{KwIn}{Input}
\SetKwInOut{KwOut}{Output}
\SetKwComment{Comment}{$\triangleright$\ }{}

\KwIn{Initial mean estimate $\vec{\mu_0}$, 
Initial covariance estimate $\vec{\Sigma_0}$, 
target distribution $p(\vec{\theta}|\vx)$, number of iterations N, batch size B, learning rate $\epsilon$}

\KwOut{Multivariate normal variational distribution $q_{\bw}(\vec{\theta}):=\mathcal{N}(\vec{\mu}, \vec{\Sigma}$)}

$q_{\bw} \leftarrow \mathcal{N}(\vmu_0, \vSigma_0) $ \;

\For{$i =0 , \ldots,  N-1$ \quad \quad \Comment{iteration loop}  }  { 
    $\{\vtheta^{(0)},\vtheta^{(1)},...,\vtheta^{(B)}\} \sim q_{\vec{w}}(\theta)$ \hfill \Comment*[f]{Sample a batch of B points}\;
    $\mathrm{ELBO} = \sum_j \,
    \log(p(\vtheta^{(j)}, \vx)- \log q_{\vec{w}}(\theta^{(j)})$ \;
   $\vec{w} \leftarrow \vec{w} - \epsilon \nabla_{\vec{w}}\mathrm{ELBO}$ \hfill \Comment*[f]{Optimization step, we use ADAM}\;
}
\caption{Black-box variational inference}
\label{alg:bbvi}
\end{algorithm}


We evaluate the performance of GSM-VI in different settings. 
GSM-VI uses a multivariate Gaussian distribution as its variational family. We separately investigate when the target posterior is in this family and when it is not.

\textbf{Algorithmic details and comparisons.} We compare GSM-VI with a reparameterization variant of BBVI as the baseline, similar to ~\cite{Kucukelbir2016}. BBVI uses the same multivariate Gaussian variational family, which we fit by maximizing the ELBO. (Maximizing the ELBO is equivalent to minimizing KL). We use the ADAM optimizer \cite{Kingma2015} with default settings but vary the learning rate between $10^{-1}$ and $10^{-3}$. We report results only for the best performing setting. The full BBVI algorithm is in Algorithm \ref{alg:bbvi}.

The only free parameter in GSM-VI is the batch size $B$. We find that $B=2$ is better than $B=1$, but there is no improvement beyond that. In all studies, we report results for $B=2$. 

Both algorithms require an initial variational distribution. Unless specified otherwise, we initialize the variational distribution with zero mean and identity covariance matrix.

\textbf{Evaluation metric.} GSM-VI does not explicitly minimize any loss function.
Hence to compare its performance against BBVI, we estimate empirical divergences between the variational and the target distribution and show their evolution with the number of gradient evaluations. In the experiments with synthetic models in Sections~\ref{sec:exp1}, \ref{sec:exp2}, and \ref{sec:exp3} we have access to the true distribution; so we measure the forward KL divergence (FKL) empirically $\big(\mathrm{FKL} = \sum_{{\vtheta_i} \sim p(\vtheta)} \log p(\vtheta_i) - \log q (\vtheta_i)\big)$. To reduce stochasticity, we always use the same pre-generated set of 1000 samples from the target distribution. 
In Section~\ref{sec:exp4}, we do not have access to the samples from the target distribution; so we monitor the negative ELBO. In all experiments, we show the results for 10 independent runs. 

\label{sec:exp1}
\begin{figure}
    \centering
    \includegraphics[width=\textwidth ]{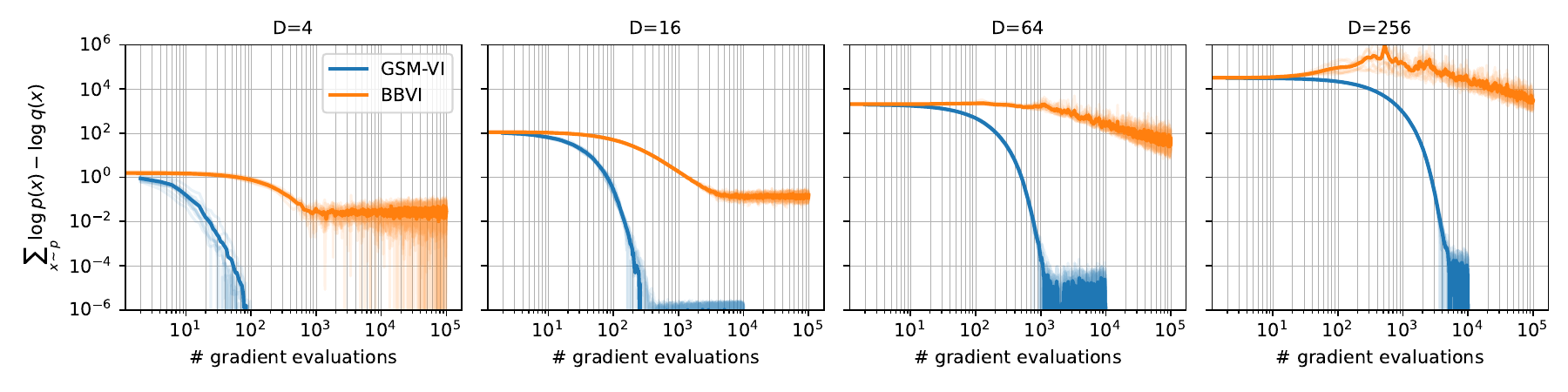}
    \caption{Scaling with dimensions: evolution of FKL with the number of gradient evaluations of the target distribution, which is a  Gaussian distribution with dense covariance. Different panels show the results for different dimensions $D$ of the distribution, specified in the title.
    Translucent lines show the scatter of 10 different runs and the solid line shows the average.}    \label{fig:dims}
\end{figure}

\subsection{GSM-VI for Gaussian approximation}

We begin by studying GSM-VI where the target distribution is also a multivariate Gaussian.

\textbf{Scaling with dimensions.}  How does GSM-VI scale with respect to the dimensions of the sample space? Figure \ref{fig:dims} shows the convergence of FKL for GSM-VI and BBVI as the dimension (D) of the sample space increases. Empirically, we find that the number of iterations required for convergence increases almost linearly with dimensions for GSM. The scaling for BBVI is worse, and it requires 100 times more iterations even for small problems ($D<64$), while also converging to a sub-optimal solution as measured by the FKL metric. 

\label{sec:exp2}
\begin{figure}
    \centering
    \includegraphics[width=\textwidth ]{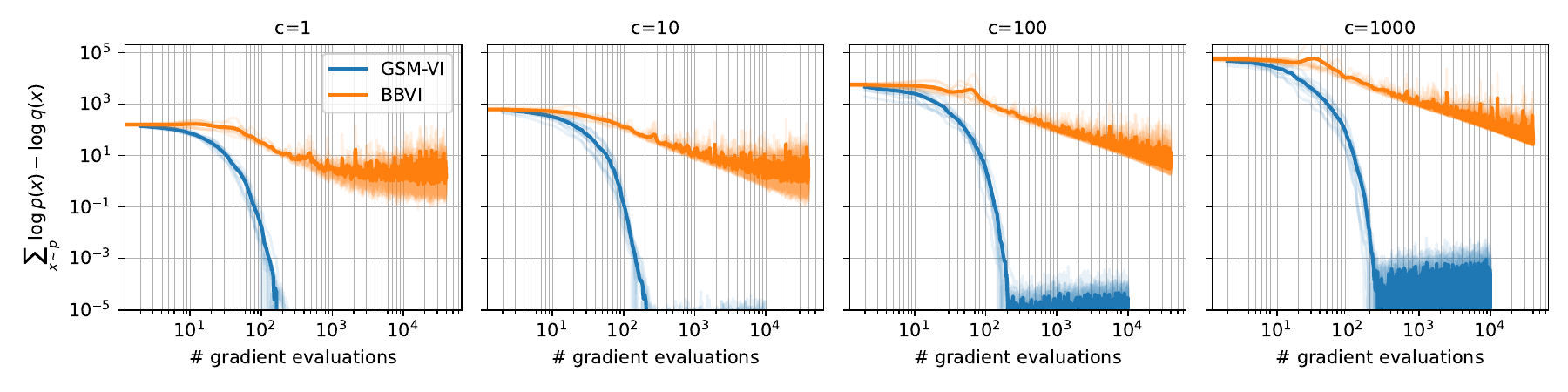}
    \caption{Impact of condition number: evolution of FKL with the number of gradient evaluations of the target distribution. The target is a 10-dimensional Gaussian albeit with a dense covariance matrix of different condition numbers $c$ specified in  the title of different panels. Translucent lines show the scatter of 10 different runs and the solid line shows the average.}
    \label{fig:cond}
\end{figure}

\textbf{Impact of condition number.} What is the impact of the conditioning of the target distribution?  We again consider a Gaussian target distribution, but vary the condition number of its covariance matrix by fixing its smallest eigenvalue to $0.1$, and scaling the largest eigenvalue to $0.1\times c$. Figure \ref{fig:cond} shows the results for a 10 dimensional Gaussian where we vary the condition number $c$ from 1 to 1000. Convergence of GSM-VI seems to be largely insensitive to the condition number of the covariance matrix. BBVI on the other hand struggles with poorly conditioned problems, and it does not converge for $c>100$ even with 100 times more iterations than GSM. 

\label{sec:exp3}
\begin{figure}
    \centering
    \includegraphics[width=\textwidth ]{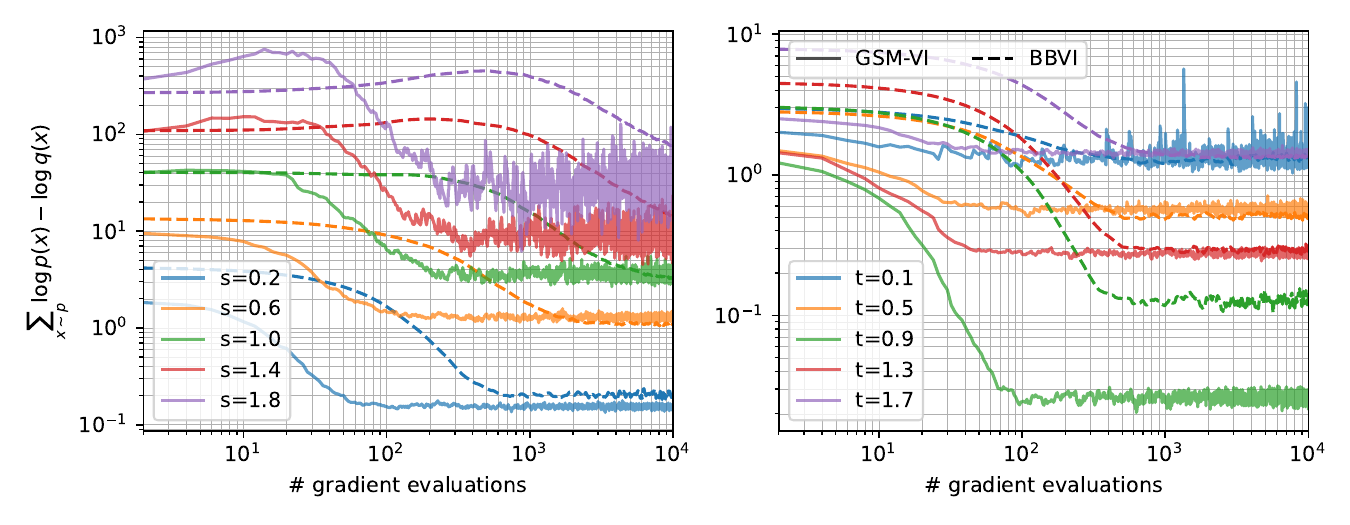}
    \caption{Impact of non-Gaussianity: evolution of FKL with the number of gradient evaluations for Sinh-arcsinh distributions with 10-dimensional dense Gaussian as the base distribution. Gaussian distribution has $s=0,\, t=1$. 
    In the left panel, we vary skewness $s$ while fixing $t=1$, and in the right panel we vary the tail-weight $t$ with skewness fixed to $s=0$.
    Solid lines are the results for GSM, dashed for BBVI.
    }
    \label{fig:nongauss}
\end{figure}

\subsection{GSM-VI for non-Gaussian target distributions}

GSM-VI was designed to solve the exact score-matching equations \Cref{eq:scorematch}, which only have a solution when  the family of variational distributions contains the target distribution (see Lemma~\ref{lem:scorematch}). Here we investigate the sensitivity of GSM-VI to this assumption by fitting non-Gaussian target distributions with varying degrees on non-Gaussianity. Specifically, we suppose that the target has a multivariate \texttt{Sinh-arcsinh} normal distribution \citep{Jones2019}
\begin{equation}
    \vec{z} \sim \mathcal{N}(\vmu, \vSigma); \quad \vec{x} = \mathrm{sinh}\left( \frac{1}{t}\left[\mathrm{sinh}^{-1}(\vec{z}) + s\right]\right)
\end{equation}
where the scalar parameters $s$ and $t$ control, respectively, the skewness and the heaviness of the tails, and the choices $s=0$ and  $t=1$ reduce a Gaussian distribution as a special case.

\Cref{fig:nongauss} shows the result for fitting the variational Gaussian to a 10-dimensional Sinh-arcsinh normal distribution for different values of $s$ and $t$.
As the target departs further from Gaussianity, the quality of variational fit worsens for both GSM-VI (solid lines) and BBVI (dashed lines), but they converge to a fit of similar quality in terms of average FKL. GSM converges to this solution at least 10 times faster than BBVI. For highly non-Gaussian targets ($s \geq 1$ or $|t-1| \geq 0.8$), we have found that GSM-VI does not converge to a fixed point, and it can experience oscillations that are larger in amplitude than BBVI, see for instance $s=1.8$ and  $t=0.1$ on the left and right of Figure \ref{fig:nongauss}, respectively.

\subsection{GSM-VI on real-world data}
\label{sec:exp4}

\begin{figure}
    \centering
    \includegraphics[width=\textwidth ]{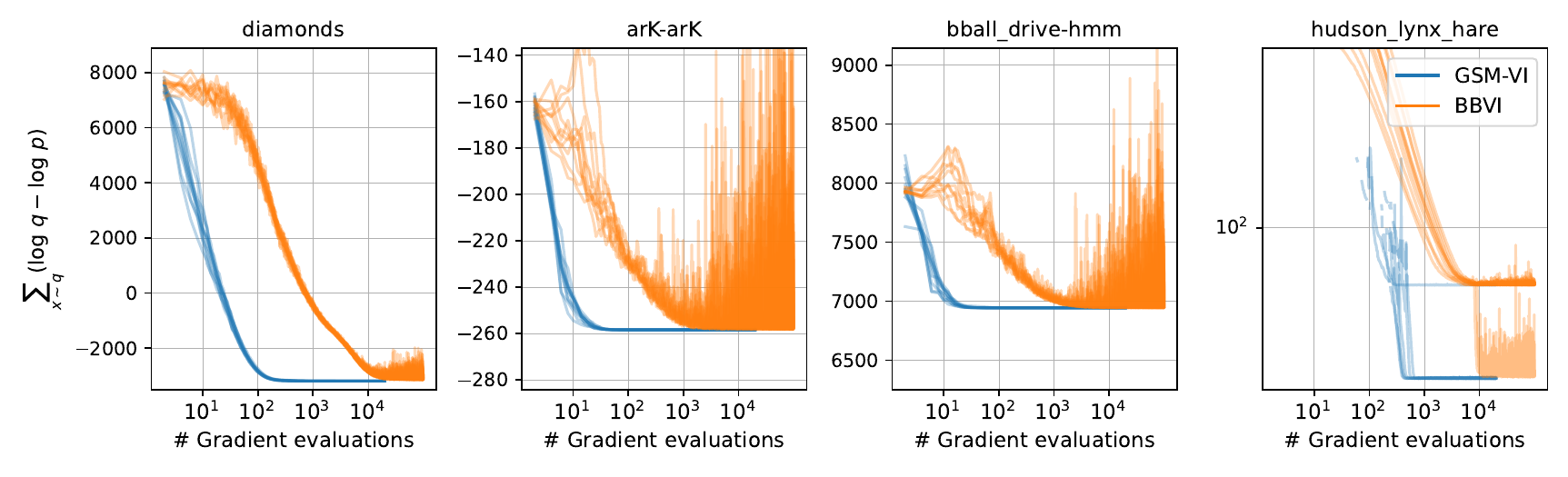}
    \caption{Models from \texttt{posteriordb}: Convergence of the ELBO for four models with multivariate normal posteriors. We show results for 10 runs.
    }
    \label{fig:pdb_g}
\end{figure}

\begin{figure}[t]
    \centering
    \includegraphics[width=\textwidth ]{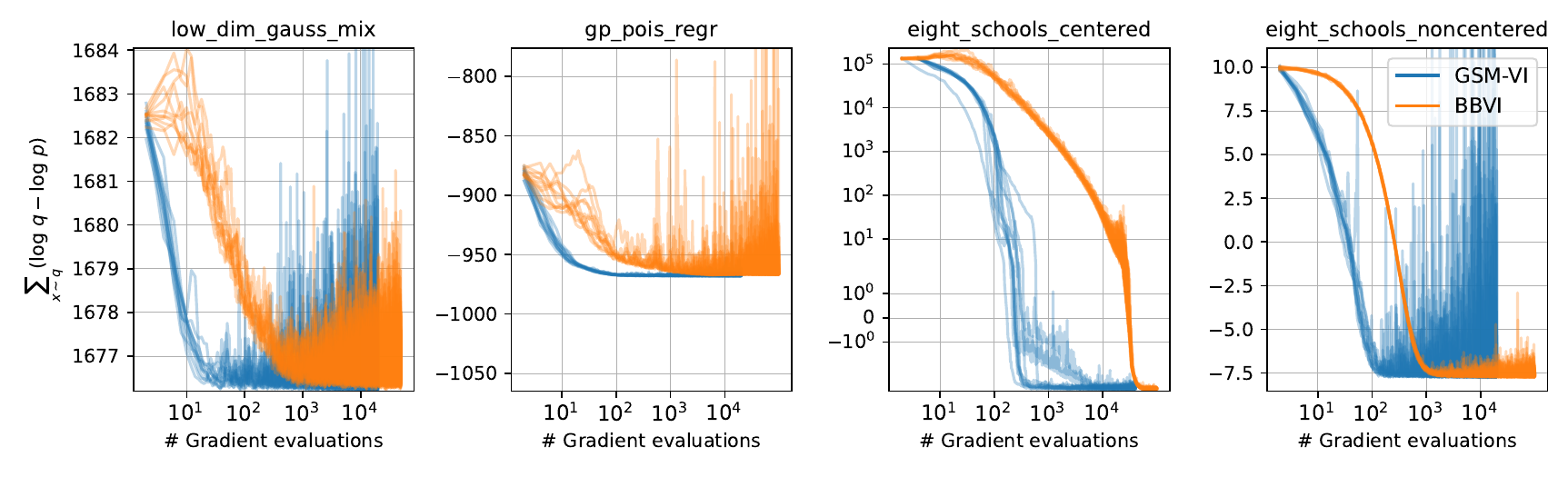}
     \caption{Models from \texttt{posteriordb}: Covergence of ELBO for four models with non-Gaussian posteriors. We show results for 10 runs.
    }   
    \label{fig:pdb_ng}
\end{figure}

We evaluate GSM-VI for approximate on real-world data with 8 models from the \texttt{posteriordb} database \citep{Magnusson2022}. The database provides the \texttt{Stan} code, data and reference posterior samples, and we use \texttt{bridgestan} to access the gradients of these models \citep{Carpenter2012, Roualdes2023}. We study the following models: \texttt{diamonds} (generalized linear models), \texttt{hudson-lynx-hare} (differential equation dynamics), \texttt{bball-drive} (hidden Markov models) and \texttt{arK} (time-series), \texttt{eight-schools-centered} and \texttt{non-centered} (hierarchical meta-analysis),
\texttt{gp-pois-regr} (Gaussian processes), \texttt{low-dim-gauss-mix} (Gaussian mixture).

For each model (except \texttt{hudson-lynx-hare}), we initialize the variational parameter $\vmu_0$ at the mode of the distribution, and we set $\vSigma_0=0.1\,\vec{I}_d$ where $\vec{I}_d$ is the identity matrix of dimension~$d$. 
For \texttt{hudson-lynx-hare}, we initialize the variational distribution as standard normal. 
We also experimented with other initializations. We find that they do not qualitatively change the conclusions, but can have larger variance between different runs. 

We show the evolution of the ELBO for 10 runs of these models.
Four of the models have posteriors that can be fit with multivariate normal distribution:
\texttt{diamonds, hudson-lynx-hare, bball-drive,} and \texttt{arK}. \Cref{fig:pdb_g} shows the result for these models. The other models have non-Gaussian posteriors: \texttt{eight-schools-centered}, \texttt{eight-schools-non-centered}, \texttt{gp-pois-regr,}, and \texttt{low-dim-gauss-mix}. \Cref{fig:pdb_ng} shows the results.

Overall, GSM-VI outperforms BBVI by a factor of 10-100x. When the target posterior is Gaussian, GSM-VI leads to more stable solutions. When the target is non-Gaussian, it converges to the same quality of variational approximation as BBVI. Further, though the ELBO estimate is noisy at the convergence, the 1-D marginals and moments of parameters remain stable.


\section{Conclusion and Future Work}

In this paper we proposed Gaussian score matching VI (GSM-VI), a new approach for VI when the variational family is multivariate Gaussian. GSM-VI is not based on minimizing a divergence or loss function between the variational and target distribution; instead, it repeatedly solves the exact score matching equations with closed-form updates for the mean and covariance matrix of the variational distribution. Our algorithm is implemented in an open-source Python code at \href{https://github.com/modichirag/GSM-VI}{https://github.com/modichirag/GSM-VI}.

Unlike approaches that are rooted in stochastic gradient descent, GSM-VI does not require the tuning of step-size hyper-parameters. It has only one free parameter, the batch size, and we found a batch-size of 2 to perform competitively across all experiments.  Another choice is how to initialize the variational distribution. For the experiments in this paper, we initialized the covariance matrix as the identity matrix, but additional gains could potentially be made with more informed choices derived from a Laplace approximation or L-BFGS Hessian approximation \cite{Zhang2022}. 

We evaluated the performance of GSM-VI on synthetic targets and real-world models from \texttt{posteriordb}. In general, we found that it requires 10-100x fewer gradient evaluations than BBVI for the target distribution to converge. When the target distribution is itself multivariate Gaussian, we observed that GSM-VI scales almost \emph{linearly} with dimensionality, which is significantly better than BBVI, and that GSM-VI is almost insensitive to the condition number of the target covariance matrix. Compared to BBVI, we also found that GSM-VI converges more quickly to a solution with a larger ELBO, which 
is surprising given that BBVI explicitly maximizes the ELBO.


GSM-VI is derived from a principled goal and justification, and the empirical studies indicate that it is a promising method. An important avenue for future work is to provide a proof that GSM-VI converges. We note that good convergence results have been obtained for analogous methods that project onto interpolation equations for empirical risk minimization. For instance the Stochastic Polyak Step achieves the min-max optimal rates of convergence for SGD~\cite{SPS}. Note that convergence of VI is a generally challenging problem, with no known rates of convergence even for BBVI~\cite{Domke2019,Domke20}. 

In another avenue of future work, the score-matching VI idea can potentially be used to design other methods for VI. As one example, we can consider non-Gaussian variational approximations, such as those in the exponential family. As another example, if the variational family is a mixture of Gaussians, we can employ GSM-VI to update the individual components of the mixture. 



\bibliographystyle{plainnat}
\bibliography{references}


\appendix

\section{Proof of Lemma~\ref{lem:scorematch} }
\scorematch*
\begin{proof}
\eqref{eq:interpolation} $\implies $\eqref{eq:scorematch}: Differentiating both sides of \eqref{eq:interpolation} in $\vec{\theta}$ gives
\begin{align*}
   \nabla_{\theta} \log q_{\bw^*}(\vec{\theta}) &=   \nabla_{\vtheta} \log p(\vtheta | \vx) = \nabla_{\theta} \left(\log p(\vtheta, \vx) -\log p(\vx)\right) \\
   & =  \nabla_{\theta} \log p(\vec{\theta, \vx}),\qquad \forall \vtheta \in \Theta.
\end{align*}

\revised{
\eqref{eq:scorematch} $\implies $\eqref{eq:interpolation}: 
Let $\vtheta_0$ be any arbitrary point in $\Theta$. Because $\Theta$ is path-connected, every $\vtheta \in \Theta$ is connected to $\vtheta_0$ via a differentiable path ${\bf r}(t)$ where ${\bf r}(0) =\vtheta_0$ and ${\bf r}(1) = \vtheta.$}
\revised{Integrating both sides of~\eqref{eq:scorematch} along this path ${\bf r}(t)$, using again that $\nabla_{\theta} \log p(\vtheta | \vx) = \nabla_{\theta} \log p(\vtheta , \vx)$, and using the Fundamental Theorem of Calculus gives
\begin{align*}
\log q_{\bw^*}(\vtheta) - \log  q_{\bw^*}(\vtheta_0) &=
    \int_{0}^1   \dotprod{ \nabla_{\theta}\log q_{\bw^*}({\bf r}(t)), {\bf r}'(t) } dt \\
    &= \int_{0}^1   \dotprod{ \nabla_{\theta}\log p(({\bf r}(t)) | \vx), {\bf r}'(t) } dt   \\
    &= \log  p(\vtheta| \vx) - \log  p(\vtheta_0 | \vx), \quad \forall \vtheta \in \Theta.
\end{align*} 
Rearranging and defining $C :=\log  q_{\bw^*}(\vtheta_0 ) - \log  p(\vtheta_0 | \vx) $ gives
\[ \log q_{\bw^*}(\vtheta) = \log p(\vtheta | \vx) + C,\qquad \forall \vtheta \in \Theta, \]
 By exponentiating both sides and integrating in $\vtheta$ over $\Theta$ we have that
\[ 1 = \int_{\vtheta \in\Theta}q_{\bw^*}(\vtheta) d\vtheta= e^{C}\int_{\vtheta \in \Theta} p(\vtheta | \vx)  d\vtheta = e^{C},\]
where we used that $\Theta$ contains the support of both $p(\vtheta | \vx)$ and $q_{\bw^*}(\vtheta)$.}
Consequently $C=0$, which gives our result.
\end{proof}

\section{Proof of Theorem~\ref{theo:GSM} }

Here we give the proof for Theorem~\ref{theo:GSM}. We also re-introduce the theorem with a simplified notation, where we use $(\vec{\mu}_0, \mat{\Sigma}_0)$ to denote the mean and covariance at the previous time step of the method, thus dropping the iteration counter $t$.
\begin{restatable}{theorem}{GSMtheo} \label{theo:GSMapp}
\textbf{(GSM updates)} Let $p(\vec{\vtheta},\vec{x})$ be given for some $\vec{\vtheta}\in\Re^d$, and let $q_{0}(\vec{\vtheta})$ and $q(\vec{\vtheta})$ be the multivariate normal distributions, respectively, with means $\vec{\mu}_0$ and $\vec{\mu}$ and covariance matrices $\mat{\Sigma}_0$ and $\mat{\Sigma}$. 
We seek the distribution
\begin{equation}
\arg\!\min_{\!\!\!\!\!\!\!\vec{\mu},\mat{\Sigma} = \mat{\Sigma}^\top} \bigg[ {\rm KL}(q_0,q)\bigg] \quad\mbox{such that}\quad\nabla_{\vec{\vtheta}} \log q(\vec{\vtheta}) = \nabla_{\vec{\vtheta}} \log p(\vec{\vtheta},\vec{x}).\\[2ex]
\end{equation}
As shorthand, let $\vec{g} \ := \ \nabla_{\vtheta} \log p(\vec{\vtheta},\vec{x}), $
and let $\rho$ be the positive root of the quadratic equation
\begin{equation}
\rho(1\!+\!\rho)\ =\ \vec{g}^\top\mat{\Sigma}_0\vec{g} + \big[(\vec{\mu}_0\!-\!\vec{\vtheta})^\top\!\vec{g}\big]^2.
\end{equation}
Then the solution is given by the following closed-form updates:
\begin{eqnarray}
 \vec{\mu}  & = & \vec{\mu}_0\, +\frac{1}{1+\rho}\left[ \mI - \frac{(\vec{\mu}_0-\vec{\vtheta}) \vec{g}^\top}{1+\rho+ (\vec{\mu}_0-\vec{\vtheta})^\top \vec{g}}\right] \mat{\Sigma}_0\left(\vec{g} -\nabla_{\vec{\vtheta}} \log q_0(\vec{\vtheta_0})\right), \\[2ex]
 \mat{\Sigma} & =& \mat{\Sigma}_0 + (\vec{\mu}_0-\vec{\vtheta})(\vec{\mu}_0-\vec{\vtheta})^\top - (\vec{\mu}-\vec{\vtheta})(\vec{\mu}-\vec{\vtheta})^\top.
\end{eqnarray}
Furthermore,  if $\mat{\Sigma}_0 $ is symmetric positive definite then so is $\mat{\Sigma} $.
\end{restatable}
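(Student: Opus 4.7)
My plan exploits the linearity of the Gaussian score: for $q = \mathcal{N}(\vec{\mu},\mat{\Sigma})$ we have $\nabla_{\vtheta}\log q(\vtheta) = \mat{\Sigma}^{-1}(\vec{\mu}-\vtheta)$, so the single-point score-matching constraint collapses to the algebraic identity $\vec{\mu} = \vtheta + \mat{\Sigma}\vec{g}$. This eliminates all $d$ degrees of freedom of $\vec{\mu}$ in terms of $\mat{\Sigma}$ and turns the constrained KL minimisation into an unconstrained optimisation over $\mat{\Sigma}$ alone; the mean update is then a free by-product.

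Writing $\vec{a} := \vec{\mu}_0 - \vtheta$ and $\vec{u} := \vec{\mu} - \vtheta = \mat{\Sigma}\vec{g}$, I would substitute $\vec{\mu} - \vec{\mu}_0 = \mat{\Sigma}\vec{g} - \vec{a}$ into the standard Gaussian KL formula; the Mahalanobis term expands cleanly to $\vec{g}^{\top}\mat{\Sigma}\vec{g} - 2\vec{g}^{\top}\vec{a} + \vec{a}^{\top}\mat{\Sigma}^{-1}\vec{a}$. Differentiating the resulting objective in $\mat{\Sigma}$ by the standard identities $\partial \log\det\mat{\Sigma} = \mat{\Sigma}^{-1}$ and $\partial \mathrm{tr}(\mat{\Sigma}^{-1}\mat{X}) = -\mat{\Sigma}^{-1}\mat{X}\mat{\Sigma}^{-1}$, setting the gradient to zero, and left/right-multiplying by $\mat{\Sigma}$ collapses the first-order condition to the clean matrix equation
\[
\mat{\Sigma} + \mat{\Sigma}\vec{g}\vec{g}^{\top}\mat{\Sigma} \;=\; \mat{\Sigma}_0 + \vec{a}\vec{a}^{\top}.
\]
This identity immediately yields the stated covariance update $\mat{\Sigma} = \mat{\Sigma}_0 + \vec{a}\vec{a}^{\top} - \vec{u}\vec{u}^{\top}$.

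To make $\vec{\mu}$ explicit I would right-multiply the matrix equation by $\vec{g}$, obtaining $(1+\rho)\vec{u} = \mat{\Sigma}_0\vec{g} + (\vec{a}^{\top}\vec{g})\vec{a}$ with $\rho := \vec{g}^{\top}\vec{u}$, and then dot with $\vec{g}$ to recover the scalar quadratic $\rho(1+\rho) = \vec{g}^{\top}\mat{\Sigma}_0\vec{g} + (\vec{a}^{\top}\vec{g})^{2}$ stated in the theorem. With $\rho$ determined as the positive root, $\vec{u}$ (and therefore $\vec{\mu}$) is a linear combination of $\mat{\Sigma}_0\vec{g}$ and $\vec{a}$. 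The stated form of $\vec{\mu} - \vec{\mu}_0$ then emerges by rewriting $\mat{\Sigma}_0\vec{g} - \vec{a} = \mat{\Sigma}_0(\vec{g} - \nabla_{\vtheta}\log q_0(\vtheta))$ and using the scalar quadratic to coerce the remaining terms into the projector $\mat{I} - \vec{a}\vec{g}^{\top}/(1 + \rho + \vec{a}^{\top}\vec{g})$.

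The step I expect to be the main obstacle is the positive-definiteness claim: the downdated expression $\mat{\Sigma}_0 + \vec{a}\vec{a}^{\top} - \vec{u}\vec{u}^{\top}$ is a rank-one downdate of a positive-definite matrix, and positivity is not transparent from it. My plan is instead to solve the matrix equation $\mat{\Sigma} + \mat{\Sigma}\vec{g}\vec{g}^{\top}\mat{\Sigma} = \mat{M}$ with $\mat{M} := \mat{\Sigma}_0 + \vec{a}\vec{a}^{\top} \succ 0$ using the rank-one ansatz $\mat{\Sigma} = \mat{M} - \alpha\,\mat{M}\vec{g}\vec{g}^{\top}\mat{M}$. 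Substitution reduces the matrix equation to a scalar quadratic in $\alpha$ whose relevant root, using $\vec{g}^{\top}\mat{M}\vec{g} = \rho(1+\rho)$, is $\alpha = 1/(1+\rho)^{2}$. The factorisation $\mat{\Sigma} = \mat{M}^{1/2}\bigl[\mat{I} - (1+\rho)^{-2}\vec{w}\vec{w}^{\top}\bigr]\mat{M}^{1/2}$ with $\vec{w} := \mat{M}^{1/2}\vec{g}$ satisfying $\|\vec{w}\|^{2} = \rho(1+\rho)$ then makes the inner bracket's smallest eigenvalue equal to $1/(1+\rho) > 0$, and $\mat{\Sigma} \succ 0$ follows.
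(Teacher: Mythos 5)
Your proposal is correct, and it reaches the theorem by a genuinely different route than the paper. The paper keeps both $\vec{\mu}$ and $\mat{\Sigma}$ as optimization variables, introduces a Lagrange multiplier $\vec{\lambda}$ for the score-matching constraint, enforces symmetry of $\mat{\Sigma}^{-1}$ through an auxiliary parameterization $\mat{\Sigma}^{-1} = \tfrac{1}{2}(\mat{\Phi}+\mat{\Phi}^\top)$, and then eliminates $\vec{\lambda}$ from the three stationarity conditions to get the rank-two covariance update; the mean update comes from reducing a nonlinear system to a linear one (all nonlinearity absorbed into $\rho$) followed by Sherman--Morrison. You instead substitute the constraint $\vec{\mu} = \vtheta + \mat{\Sigma}\vec{g}$ directly into the KL objective, which removes the multiplier entirely and yields the single matrix equation $\mat{\Sigma} + \mat{\Sigma}\vec{g}\vec{g}^\top\mat{\Sigma} = \mat{M} := \mat{\Sigma}_0 + \vec{a}\vec{a}^\top$; the rank-two update, the scalar quadratic for $\rho$, and the projector form of the mean all follow from this one equation (I checked the last step: with $s := \vec{a}^\top\vec{g}$ one has $\vec{g}^\top\vec{\varepsilon}_0 = (\rho-s)(1+\rho+s)$, so your expression for $\vec{\mu}-\vec{\mu}_0$ agrees with the stated Sherman--Morrison form). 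The most substantive difference is the positive-definiteness proof: the paper works from the downdate $\mat{\Sigma} = \mat{M} - (\mat{\Sigma}\vec{g})(\mat{\Sigma}\vec{g})^\top$, takes an arbitrary eigenvector of $\mat{\Sigma}$, and needs a Cauchy--Schwarz bound plus an auxiliary identity to force each eigenvalue positive; your rank-one ansatz solves the matrix equation explicitly as $\mat{\Sigma} = \mat{M} - (1+\rho)^{-2}\mat{M}\vec{g}\vec{g}^\top\mat{M}$, and the congruence $\mat{\Sigma} = \mat{M}^{1/2}\bigl[\mI - (1+\rho)^{-2}\vec{w}\vec{w}^\top\bigr]\mat{M}^{1/2}$ with $\|\vec{w}\|^2 = \rho(1+\rho)$ makes positivity immediate, since the bracket's smallest eigenvalue is $1/(1+\rho)$ and congruence by the invertible $\mat{M}^{1/2}$ preserves definiteness. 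This is cleaner than the paper's argument, and it buys an explicit formula for $\mat{\Sigma}$ in terms of known quantities rather than an implicit one. Two small points you should make explicit to be complete: (i) the unconstrained matrix gradient you set to zero is automatically symmetric, so the constraint $\mat{\Sigma}=\mat{\Sigma}^\top$ is inactive and your shortcut past the paper's $\mat{\Phi}$-symmetrization is legitimate; and (ii) the scalar equation $(1-\alpha\beta)^2 = \alpha$ has a second root $\alpha = 1/\rho^2$, which must be discarded because it gives $\vec{g}^\top\mat{\Sigma}\vec{g} = -(1+\rho) < 0$, impossible for a covariance matrix --- the same sign argument the paper uses to select the positive root of its $\rho$ quadratic. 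Neither point is a gap, just a line of justification each.
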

\begin{proof}
The constraint in this optimization is given by
\begin{eqnarray}
\vec{g} & = & \nabla_{\vec{\theta}} \log q(\vec{\theta}) \\
  & = & \nabla_{\vec{\theta}} \left[-\tfrac{1}{2}(\vec{\theta}-\vec{\mu})\mat{\Sigma}^{-1}(\vec{\theta}-\vec{\mu}) - \tfrac{1}{2}\log\big((2\pi)^d|\mat{\Sigma}|\big)\right] \\
 & = & -\mat{\Sigma}^{-1}(\vec{\theta}-\vec{\mu}).
 \end{eqnarray}
The KL divergence is given by
\begin{equation}
{\rm KL}(q_0,q)\ =\  
  \frac{1}{2}\left\{ \mbox{tr}[\mat{\Sigma}^{-1}\mat{\Sigma}_0] + \log\frac{|\mat{\Sigma}|}{|\mat{\Sigma}_0|}
    + (\vec{\mu}-\vec{\mu}_0)^\top \mat{\Sigma}^{-1}(\vec{\mu}-\vec{\mu}_0)  - d\right\}.
\end{equation}
Dropping irrelevant terms from the optimization, we obtain the Lagrangian
\begin{equation}
{\cal L}(\vec{\mu},\mat{\Sigma},\vec{\lambda})\ =\ \frac{1}{2}\left\{ \mbox{tr}[\mat{\Sigma}^{-1}\mat{\Sigma}_0] - \log|\mat{\Sigma}^{-1}| + (\vec{\mu}-\vec{\mu}_0)^\top \mat{\Sigma}^{-1}(\vec{\mu}-\vec{\mu}_0)\right\}  + \vec{\lambda}^\top\big(\mat{\vec{g}-\Sigma}^{-1}(\vec{\mu}-\vec{\theta})\big).
\end{equation}
It is easier to optimize the matrix $\mat{\Sigma}^{-1}$ instead of $\mat{\Sigma}$.  We can enforce the symmetry of $\mat{\Sigma}^{-1}$ by writing
\begin{equation}
\mat{\Sigma^{-1}}\ =\ \tfrac{1}{2}\left(\mat{\Phi} + \mat{\Phi}^\top\right)
\end{equation}
and performing an unconstrained optimization over $\mat{\Phi}$. With respect to the latter, the gradients of the Lagrangian are given by
\begin{equation}
\frac{\partial {\cal L}}{\partial \Phi_{ij}}\ =\
  \sum_{kl} \left(\frac{\partial {\cal L}}{\partial \Sigma^{-1}_{kl}}\right) \left(\frac{\partial {\Sigma^{-1}_{kl}}}{\partial \Phi_{ij}}\right)\ =\
  \sum_{kl} \left(\frac{\partial {\cal L}}{\partial \Sigma^{-1}_{kl}}\right) \left(\frac{1}{2}\delta_{ki}\delta_{lj} + \frac{1}{2}\delta_{kj}\delta_{li}\right)\ =\ 
  \frac{1}{2}\left(\frac{\partial {\cal L}}{\partial \Sigma^{-1}_{ij}} + \frac{\partial {\cal L}}{\partial \Sigma^{-1}_{ji}}\right).
\end{equation}
Next we examine where the gradients of the Lagrangian vanish:
\begin{eqnarray}
\vec{0} = \frac{\partial{\cal L}}{\partial\vec{\mu}} 
  & \Longrightarrow &  \vec{0}\ =\ \mat{\Sigma}^{-1}(\vec{\mu}-\vec{\mu_0})\, -\, \mat{\Sigma}^{-1}\vec{\lambda}\quad
   \Longrightarrow\quad \boxed{\vec{\lambda} = \vec{\mu} - \vec{\mu}_0}\quad \label{eq:grad_mu} \\ \nonumber  \\
\vec{0} = \frac{\partial{\cal L}}{\partial\vec{\lambda}} 
  & \Longrightarrow & \vec{0}\ =\ \mat{\vec{g}-\Sigma}^{-1}(\vec{\mu}-\vec{\theta})\quad
    \Longrightarrow\quad \boxed{\vec{\mu}-\vec{\theta} = \mat{\Sigma}\vec{g}}\quad  \label{eq:grad_lambda}
    \\ \nonumber \\
\mat{0} = \frac{\partial{\cal L}}{\partial\mat{\Phi}}
  & \Longrightarrow & \mat{0}\ =\ 
   \mat{\Sigma}_0 - \mat{\Sigma} - (\vec{\mu}-\vec{\mu}_0)(\vec{\mu}-\vec{\mu}_0)^\top 
   - \left[ \vec{\lambda}(\vec{\mu}-\vec{\theta})^\top +  (\vec{\mu}-\vec{\theta})\vec{\lambda}^\top\right], \\
   & \Longrightarrow & \boxed{\mat{\Sigma}\ =\ \mat{\Sigma}_0\, +\, (\vec{\mu}-\vec{\mu}_0)(\vec{\mu}-\vec{\mu}_0)^\top\, -\, \vec{\lambda}(\vec{\mu}-\vec{\theta})^\top\, -\, (\vec{\mu}-\vec{\theta})\vec{\lambda}^\top}
   \label{eq:grad_phi}
\end{eqnarray}
We claim that these equations (though nonlinear) can be solved in closed form. The first step is to eliminate $\vec{\lambda}$ from eq.~(\ref{eq:grad_phi}) using eq.~(\ref{eq:grad_mu}). In this way we find
\begin{eqnarray}
\mat{\Sigma} & = & \mat{\Sigma}_0 + (\vec{\mu}-\vec{\mu}_0)(\vec{\mu}-\vec{\mu}_0)^\top 
   - (\vec{\mu}-\vec{\mu}_0)(\vec{\mu}-\vec{\theta})^\top  - (\vec{\mu}-\vec{\theta})(\vec{\mu}-\vec{\mu}_0)^\top \\
   & = & \mat{\Sigma}_0 - \vec{\mu}\vec{\mu}^\top + \vec{\mu}\vec{\theta}^\top + \vec{\theta}\vec{\mu}^\top + \vec{\mu}_0\vec{\mu}_0^\top - \vec{\mu}_0\vec{\theta}^\top - \vec{\theta}\vec{\mu}_0^\top \\
   & = & \mat{\Sigma}_0 + (\vec{\mu}_0-\vec{\theta})(\vec{\mu}_0-\vec{\theta})^\top - (\vec{\mu}-\vec{\theta})(\vec{\mu}-\vec{\theta})^\top. \label{eq:Sigma}
   \end{eqnarray}
It is worth highlighting the form of this equation:
$$\mat{\Sigma}\ =\ \mat{\Sigma}_0 + (\vec{\mu}_0-\vec{\theta})(\vec{\mu}_0-\vec{\theta})^\top - (\vec{\mu}-\vec{\theta})(\vec{\mu}-\vec{\theta})^\top$$
This is a simple rank-two update for $\mat{\Sigma}$.  Note that $\mat{\Sigma}=\mat{\Sigma_0}$ if $\vec{\mu}=\vec{\mu}_0$; also, the solution for $\mat{\Sigma}$ is determined by the solution for $\vec{\mu}$.

Ultimately we must solve for $\vec{\mu}$, but first it is useful to solve for the intermediate quantity $\vec{g}^\top\mat{\Sigma}\vec{g} > 0$. From eq.~(\ref{eq:Sigma}), we obtain
\begin{equation}
\vec{g}^\top\mat{\Sigma}\vec{g}\ =\ \vec{g}^\top\mat{\Sigma}_0\vec{g} + \big[(\vec{\mu}_0-\vec{\theta})^\top\vec{g}\big]^2 - \big[(\vec{\mu}-\vec{\theta})^\top\vec{g}\big]^2,
\end{equation}
and from eq.~(\ref{eq:grad_lambda}), we obtain
\begin{equation}
\vec{g}^\top\mat{\Sigma}\vec{g}\ =\ \vec{g}^\top\mat{\Sigma}_0\vec{g} + \big[(\vec{\mu}_0-\vec{\theta})^\top\vec{g}\big]^2 - \left(\vec{g}^\top\mat{\Sigma}\vec{g}\right)^2.
\label{eq:rho}
\end{equation}
As shorthand, let $\rho = \vec{g}^\top\mat{\Sigma}\vec{g}$. Then from eq.~(\ref{eq:rho}) we see that $\rho$ satisfies the quadratic equation
$$\rho(1\!+\!\rho)\ =\ \vec{g}^\top\mat{\Sigma}_0\vec{g} + \big[(\vec{\mu}_0\!-\!\vec{\theta})^\top\!\vec{g}\big]^2.$$
Note that there are no unknowns on the right side of this equation. The correct solution is given by the positive root since $\rho=\vec{g}^\top\mat{\Sigma}\vec{g}>0$.  Also note that $\rho = (\vec{\mu}-\vec{\theta})^\top\vec{g}$ from eq.~(\ref{eq:grad_lambda}).

It is useful to define one final intermediate quantity before solving for $\vec{\mu}$. Let
$$\vec{\varepsilon}_0\ =\ \mat{\Sigma}_0\vec{g} - \vec{\mu}_0 + \vec{\theta}.$$

Note that $\vec{\varepsilon}_0$ simply measures the degree to which the parameters of $q_0(\vec{\theta})$ violate the desired constraint $\nabla_{\bw} \log q(\vec{\theta}) = \nabla_{\bw}\log p(\vec{\theta},\vec{y})$. Put another way, if $\vec{\varepsilon}_0 = \vec{0}$, then we have the trivial solution $\vec{\mu} = \vec{\mu}_0$ and $\mat{\Sigma}=\mat{\Sigma}_0$. 

Now we have everything to express the solution for $\vec{\mu}$ in a highly intuitive form; in particular, it will be immediately evident that $\vec{\mu} \rightarrow \vec{\mu}_0$ as $\vec{\varepsilon}_0\rightarrow \vec{0}$. Starting from eqs.~(\ref{eq:grad_lambda}) and (\ref{eq:Sigma}), we find
\begin{eqnarray}
\vec{\mu}-\vec{\mu}_0
  & = & \vec{\theta} - \vec{\mu}_0 + \mat{\Sigma}\vec{g}, \\
  & = & \red{\vec{\theta} - \vec{\mu}_0} + \left[\red{\mat{\Sigma}_0} + (\vec{\mu}_0-\vec{\theta})(\vec{\mu}_0-\vec{\theta})^\top - (\vec{\mu}-\vec{\theta})(\vec{\mu}-\vec{\theta})^\top\right]\red{\vec{g}}, \label{eq:nonlinear_mu} \\
    & = & \red{\vec{\varepsilon}_0}\, +\, (\vec{\mu}_0-\vec{\theta})(\vec{\mu}_0-\vec{\theta})^\top\vec{g} - \blue{(\vec{\mu}-\vec{\theta})}(\vec{\mu}-\vec{\theta})^\top\vec{g}, \\
    & = & \vec{\varepsilon}_0\, +\, (\vec{\mu}_0-\vec{\theta})(\vec{\mu}_0-\vec{\theta})^\top\vec{g}\, -\, 
    \blue{(\vec{\mu}-\vec{\mu}_0+\vec{\mu}_0-\vec{\theta})}\green{(\vec{\mu}-\vec{\theta})^\top\vec{g}}, \\
    & = & \vec{\varepsilon}_0\, -\, \green{\rho}(\vec{\mu}-\vec{\mu}_0)\, +\, (\vec{\mu}_0-\vec{\theta})[(\vec{\mu}_0-\vec{\theta}) - (\vec{\mu}-\vec{\theta})^\top]\vec{g}, \\
    & = & \vec{\varepsilon}_0\, -\, \rho(\vec{\mu}-\vec{\mu}_0)\, +\, (\vec{\mu}_0-\vec{\theta})(\vec{\mu}_0-\vec{\mu})^\top\vec{g}, \\
   & = & \vec{\varepsilon}_0\, -\, (\rho\mathbf{I} + (\vec{\mu}_0-\vec{\theta})\vec{g}^\top)(\vec{\mu}_0-\vec{\mu}). \label{eq:linear_mu}
\end{eqnarray}
Note what has happened here: eq.~(\ref{eq:nonlinear_mu}) is a system of {\it nonlinear} equations for $\vec{\mu}$, but in eq.~(\ref{eq:linear_mu}), all the nonlinearity has been expressed in terms of $\rho$. Since $\rho$ can be determined via eq.~(\ref{eq:rho}), we arrive effectively at a system of {\it linear} equations for $\vec{\mu}$. Collecting terms, we obtain
\begin{equation}
\left[(1+\rho)\mathbf{I} + (\vec{\mu}_0-\vec{\theta})\vec{g}^\top\right](\vec{\mu}-\vec{\mu}_0)\ =\ \vec{\varepsilon}_0.
\end{equation}
We thus arrive at the closed-form update
\begin{equation}
\vec{\mu}\ =\ \vec{\mu}_0\, +\, 
\left[(1+\rho)\mathbf{I} + (\vec{\mu}_0-\vec{\theta})\vec{g}^\top\right]^{-1}\vec{\varepsilon_0}
\label{eq:muupadte}
\end{equation}
It is evident from this update that $\vec{\mu} \rightarrow \vec{\mu}_0$ as $\vec{\varepsilon}_0\rightarrow \vec{0}$. The matrix inverse in this update can also be computed efficiently from the Woodbury matrix identity.

In sum, the joint update for $\vec{\mu}$ and $\mat{\Sigma}$ can be efficiently computed as follows:\\[-5ex]
\begin{enumerate}
\itemsep = 0ex
 \item Set $\vec{g} = \nabla_{\bw} \log p(\vec{\theta},\vec{y})$ and $\vec{\varepsilon}_0\ =\ \mat{\Sigma}_0\vec{g} - \vec{\mu}_0 + \vec{\theta}$.
 \item Solve $\rho(1\!+\!\rho)\ =\ \vec{g}^\top\mat{\Sigma}_0\vec{g} + \big[(\vec{\mu}_0\!-\!\vec{\theta})^\top\!\vec{g}\big]^2$ for $\rho>0$.
 \item Compute $\vec{\mu}\ =\ \vec{\mu}_0\, +\, 
\left[(1+\rho)\mathbf{I} + (\vec{\mu}_0-\vec{\theta})\vec{g}^\top\right]^{-1}\vec{\varepsilon_0}$.
\item Compute $\mat{\Sigma}\ =\ \mat{\Sigma}_0 + (\vec{\mu}_0-\vec{\theta})(\vec{\mu}_0-\vec{\theta})^\top - (\vec{\mu}-\vec{\theta})(\vec{\mu}-\vec{\theta})^\top$.
 \end{enumerate}
 Solving the quadratic in 2. for $\rho$ we have the positive 
\begin{center}
\fcolorbox{black}{shade}{$ \displaystyle
 \rho\;=\; \frac{\sqrt{1 +4(\vec{g}^\top\mat{\Sigma}_0\vec{g} + \big[(\vec{\mu}_0\!-\!\vec{\theta})^\top\!\vec{g}\big]^2)} -1 }{2} $}
 \end{center}
 
Solving the above linear equation for $\mu$ and using the Sherman Morrison formula $$\left[a \mI +\vec{u} \vec{g}^\top\right]^{-1} = \frac{1}{a}\left(\mI - \frac{\vec{u} \vec{g}^\top}{a+ \vec{u}^\top \vec{g}}\right), \quad \mbox{for every } \vec{u}, \vec{g}, a$$ gives

\begin{equation}\label{eq:zoe8ho;s9h4z4}
    \vec{\mu}  \; = \; \vec{\mu}_0\, +\frac{1}{1+\rho}\left[ \mI - \frac{(\vec{\mu}_0-\vec{\theta}) \vec{g}^\top}{1+\rho+ (\vec{\mu}_0-\vec{\theta})^\top \vec{g}}\right] \vec{\varepsilon_0}.
\end{equation}

Using that $ \nabla_{\vec{\vtheta}} \log q_0(\vec{\vtheta_0})=-\mat{\Sigma}_0^{-1}(  \vec{\vtheta}- \vec{\mu}_0)$ we have that 
$$\vec{\varepsilon_0} =\mat{\Sigma}_0\left(\vec{g} - \mat{\Sigma}_0^{-1}(\vec{\mu}_0 + \vec{\theta})\right)  =\mat{\Sigma}_0\left(\vec{g}  -\nabla_{\vec{\vtheta}} \log q_0(\vec{\vtheta_0})\right). $$
Finally substituting out $\vec{\varepsilon_0}$ in~\eqref{eq:zoe8ho;s9h4z4} the result
\begin{center}
\fcolorbox{black}{shade}{
$ \displaystyle
\vec{\mu}  \; = \; \vec{\mu}_0\, +\frac{1}{1+\rho}\left[ \mI - \frac{(\vec{\mu}_0-\vec{\theta}) \vec{g}^\top}{1+\rho+ (\vec{\mu}_0-\vec{\theta})^\top \vec{g}}\right] \mat{\Sigma}_0\left(\vec{g}  -\nabla_{\vec{\vtheta}} \log q_0(\vec{\vtheta_0})\right).$
}
\end{center}

{\bf Proof that $\mat{\Sigma_0} \;\; p.s.d \implies \mat{\Sigma} \; \; p.s.d $.}
It remains to prove that our solution for $\mat{\Sigma}$ is positive-definite, or equivalently, that all of its eigenvalues are positive. We begin by rewriting our results for 
$\mat{\Sigma}$ in eq.~(\ref{eq:Sigma})  and $\rho$ in eq.~(\ref{eq:rho}) in a more convenient form. As shorthand, let 
\begin{equation}
\mathbf{M}_0\ =\ \mat{\Sigma}_0 + (\vec{\mu}_0-\vec{\theta})(\vec{\mu}_0-\vec{\theta})^\top,
\end{equation}
so that $\mathbf{M}_0$ captures the first two terms on the right side of eq.~(\ref{eq:Sigma}). Note that $\mathbf{M}_0$ is positive-definite, a fact that we will exploit repeatedly in what follows.  In addition, recall that $\vec{\mu}-\vec{\theta}=\mat{\Sigma}\vec{g}$ from eq.~(\ref{eq:grad_lambda}). Thus with this notation we can rewrite eqs.~(\ref{eq:Sigma}) and~(\ref{eq:rho}) as
\begin{eqnarray}
\mat{\Sigma} & = & \mathbf{M}_0 - (\mat{\Sigma}\vec{g})(\mat{\Sigma}\vec{g})^\top, \label{eq:Sigma2} \\
\rho(1+\rho) & = & \vec{g}^\top\mathbf{M}_0\vec{g}. \label{eq:rho2}
\end{eqnarray}
Now let $\vec{e}$ be any normalized eigenvector of $\mat{\Sigma}$; we want to show that its corresponding eigenvalue~$\lambda_e$ is positive. From eq.~(\ref{eq:Sigma2}), it follows that
\begin{eqnarray}
\lambda_e
  & =  & \vec{e}^\top\mat{\Sigma}\vec{e}\\ 
  & = &  \vec{e}^\top\left[\mathbf{M}_0 - ({\Sigma}\vec{g})(\mat{\Sigma}\vec{g})^\top\right]\vec{e}\\
  & = &  \vec{e}^\top\mathbf{M}_0\vec{e} - \lambda_e^2(\vec{e}^\top\vec{g})^2.
\label{eq:quadratic}
\end{eqnarray}
Note that if $\vec{e}^\top\vec{g}=0$, then it follows trivially that $\lambda_e = \vec{e}^\top \mathbf{M}_0 \vec{e} > 0$. So we only need to consider the non-trivial case $\vec{e}^\top\vec{g}\neq 0$. To proceed, we note that
\begin{equation}
(\vec{e}^\top\mathbf{M}_0\vec{g})^2\ =\ (\vec{e}^\top\mathbf{M}_0^{\frac{1}{2}}\mathbf{M}_0^{\frac{1}{2}}\vec{g})^2\ \leq\ (\vec{e}^\top\mathbf{M}_0\vec{e})(\vec{g}^\top\mathbf{M}_0\vec{g}),
\label{eq:CSineq}
\end{equation}
where we have used the Cauchy-Schwartz inequality to bound $(\vec{e}^\top\mathbf{M}_o\vec{g)}$ in terms of $(\vec{e}^\top\mathbf{M}_0\vec{e})$, the latter of which appears in eq.~(\ref{eq:quadratic}). Substituting this inequality into eq.~(\ref{eq:quadratic}), we find that 
\begin{equation}
\lambda_e\  \geq\ \frac{(\vec{e}^\top\mathbf{M}_0\vec{g})^2}{\vec{g}^\top\mathbf{M}_0\vec{g}}\, -\, \lambda_e^2 (\vec{e}^\top\vec{g})^2.
\label{eq:lambda-inequality}
\end{equation}
To prove that $\lambda_e>0$ we need one more intermediate result. Focusing on the rightmost term in this equality, we note that
\begin{equation}
\lambda_e(\vec{e}^\top\vec{g})\ =\ \vec{e}^\top \Sigma \vec{g}\ =\ \vec{e}^\top\left[\mathbf{M}_0 - (\mat{\Sigma}\vec{g})(\mat{\Sigma}\vec{g})^\top\right]\vec{g}\ =\ \vec{e}^\top\mathbf{M}_0\vec{g} - \lambda_e(\vec{e}^\top\vec{g})(\vec{g}^\top\mat{\Sigma}\vec{g}),
\end{equation}
and rearranging the terms in this equation, we find
\begin{equation}
\vec{e}^\top\mathbf{M}_0\vec{g}\ =\ \lambda_e(\vec{e}^\top\vec{g})(1 + \vec{g}^\top\mat{\Sigma}\vec{g}).
\label{eq:eMv}
\end{equation}
This intermediate result is useful because it relates the two terms on the right side of eq.~(\ref{eq:lambda-inequality}). In particular, using eq.~(\ref{eq:eMv}) to eliminate the term $\vec{e}^\top\mathbf{M}_0\vec{g}$ in eq.~(\ref{eq:lambda-inequality}), we find:
\begin{eqnarray*}
\lambda_e &\geq & \frac{\left[\lambda_e(\vec{e}^\top\vec{g})(1+\vec{g}^\top\mat{\Sigma}\vec{g})\right]^2}{\vec{g}^\top\mathbf{M}_0\vec{g}}\, -\, \lambda_e^2 (\vec{e}^\top\vec{g})^2 \\[2ex]
   & = & \lambda_e^2(\vec{e}^\top\vec{g})^2\left[
              \frac{(1+\vec{g}^\top\mat{\Sigma}\vec{g})^2}{\vec{g}^\top\mathbf{M}_0\vec{g}}\, -\,1\right] \\[2ex]
   & = & \lambda_e^2(\vec{e}^\top\vec{g})^2\left[
              \frac{(1+\rho)^2}{\rho(1+\rho)}\, -\,1\right] \\[1ex]
    & = & \frac{\lambda_e^2 (\vec{e}^\top\vec{g})^2}{\rho}, \\[1ex]
    & > & 0,
\end{eqnarray*}
where the final inequality follows because the individual terms $\lambda_e^2$, $(\vec{e}^\top\vec{g})^2$, and $\rho$ are all strictly positive; note that $\lambda_e$ cannot be equal to zero because this contradicts the equality in eq.~(\ref{eq:quadratic}). This completes the proof. Perhaps it is useful that this derivation also gives upper bounds on $\lambda_e$, namely
\begin{equation}
\frac{1}{\lambda_e}\ \geq\ \frac{(\vec{e}^\top\vec{g})^2}{\rho}\quad\Longrightarrow\quad \lambda_e\ \leq\ \frac{\rho}{(\vec{e}^\top\vec{g})^2}\ =\ \frac{\vec{g}^\top\mat{\Sigma}\vec{g}}{(\vec{e}^\top\vec{g})^2}.
\end{equation}

\end{proof}


\end{document}